\def\ARX{}
    \def\ICLRorARX{}
    \def\AISorARX{}
\definecolor{processblue}{cmyk}{0.96,0,0,0}
\titlespacing{\section}{0pt}{\baselineskip}{0.5\baselineskip}
\titlespacing{\subsection}{0pt}{\baselineskip}{0.5\baselineskip}
    \titleformat{\section}[block]{\fontsize{12}{11}\selectfont \bfseries}{\thesection}{1em}{\MakeUppercase}{}
    \titleformat{\subsection}[block]{\fontsize{10}{11}\selectfont \bfseries}{\thesubsection}{1em}{}
    \titleformat{\section}[block]{\fontsize{12}{11}\selectfont \scshape}{\thesection}{1em}{}{}
    \titleformat{\subsection}[block]{\fontsize{10}{11}\selectfont \scshape}{\thesubsection}{1em}{}
\newtheorem{theorem}{Theorem}[section]
\newtheorem{question}[theorem]{Question}
\newtheorem{lemma}[theorem]{Lemma}
\newtheorem{definition}[theorem]{Definition}
\newtheorem{corollary}[theorem]{Corollary}
\newtheorem{remark}[theorem]{Remark}
\newtheorem{claim}[theorem]{Claim}
\newtheorem{example}[theorem]{Example}
\newcommand{\STL}{\mathsf{STL}}
\newcommand{\IO}{\mathsf{IO}}
\newcommand{\FLOP}{\mathsf{FLOP}}
\newcommand{\twofour}{2:4\,}
\newcommand{\hX}{\widehat{X}}
\newcommand{\cW}{\mathcal{W}}
\renewcommand{\v}{\mathsf{vec}}
\newcommand{\ssp}{^{(p)}}
\newcommand{\bD}{\mathbf{D}}
\newcommand{\bE}{\mathbf{E}}
\newcommand{\bF}{\mathbf{F}}
\newcommand{\bZ}{\mathbf{Z}}
\newcommand{\EX}{\mathbf{E_X}}
\newcommand{\EW}{\mathbf{E_W}}
\newcommand{\eY}{\widehat{Y}}
\newcommand{\eX}{\widehat{X}}
\newcommand{\eW}{\widehat{W}}
\newcommand{\expectation}{\mathbb{E}}
\newcommand{\err}{\operatorname{err}}
\newcommand{\N}{\mathbb{N}}
\newcommand{\matmul}{\textsc{MatMul}}
\newcommand{\cN}{\mathcal{N}}
\newcommand{\cD}{\mathcal{D}}
\newcommand{\abs}[1]{\left| #1 \right|}
\newcommand{\wh}{\widehat}
\newcommand{\R}{\mathbb{R}}
\renewcommand{\varepsilon}{\epsilon}
\renewcommand{\hat}{\wh}
\def\repW{{\cal W}}
\DeclareMathOperator*{\E}{{\bf {E}}}
\DeclareMathOperator{\vect}{\mathsf{vec}}
\newcommand{\FakeEnc}{{\operatorname{FakeEnc}}}
\definecolor{zhj}{RGB}{255,50,200}
\newcommand{\mycomment}[1]{{}}
\newcommand{\publicRepo}{\href{https://anonymous.4open.science/r/StrassenTile-5F01/README.md}{repository}}
    \title{Changing Base Without Losing Pace: A GPU-Efficient  Alternative to MatMuls in DNNs}
    \author{
    Nir Ailon\thanks{Nvidia Inc., \texttt{nailon@nvidia.com}.}
    \and 
    Akhiad Bercovich\thanks{Nvidia Inc., \texttt{abercovich@nvidia.com}.} 
    \and
    Yahel Uffenheimer\thanks{Hebrew University of Jerusalem, \texttt{yahelo@cs.huji.ac.il}.} 
    \and
    Omri Weinstein\thanks{Nvidia Inc., \texttt{omriw@nvidia.com}.} 
    }
\begin{document}
\ifdefined\AIS
    \twocolumn[
    \aistatstitle{Changing Base Without Losing Pace: \\ A GPU-Efficient  Alternative to MatMuls in DNNs}
    \aistatsauthor{ Nir Ailon \And Akhiad Bercovich \And Yahel Uffenheimer \And Omri Weinstein}
    
    \aistatsaddress{ Nvidia Inc. \And  Nvidia Inc. \And HUJI \And Nvidia Inc.} ] 
\fi
\ifdefined\ICLRorARX
    \maketitle
\fi

\fontsize{10}{11}\selectfont
\begin{abstract}
Modern AI relies on huge matrix multiplications (\matmul s), whose computation poses a scalability problem for inference and training. We propose an alternative, \textit{GPU native} bilinear operator to \matmul s in neural networks, which offers a three-way tradeoff between: speed, accuracy and parameter count. In particular, this operator requires substantially \emph{fewer} FLOPs to evaluate ($\ll n^3$), yet \emph{increases} the parameter count compared to \matmul\;($\gg n^2$). We call this operator \textit{Strassen-Tile} ($\STL$).  
The key idea behind $\STL$ is a local \textbf{learnable} change-of-basis, applied on \textit{tiles} of the weight and activation matrices, followed by an \textit{element-wise} product between the tiles, implemented simultaneously via \matmul. The key technical question we study is how to optimize the change-of-basis of a given layer, which is a highly non-convex problem. We show that theory-backed initializations (inspired by fast matrix and polynomial multiplication) lead to substantially better accuracy than random SGD initialization. This phenomenon motivates further algorithmic study of $\STL$ optimization in DNNs.
Our experiments demonstrate that $\STL$ can approximate 4x4 \matmul\;of \textbf{tiles} while reducing FLOPs by a factor of $2.66$, and can \textbf{improve} Imagenet-1K accuracy of SoTA T2T-ViT-7 (4.3M parameters) while lowering FLOPs. Even with non-CUDA optimized \texttt{PyTorch} code, $\STL$ achieves wall-clock speedups in the compute-bound regime. These results, together with its theoretical grounds, suggest $\STL$ as a promising building block for scalable and cost-efficient AI.  
\end{abstract}

\section{Introduction}

Matrix multiplication (\matmul) is a ubiquitous operation across all fields of science and technology. Specifically, \matmul s are the bottleneck (80\%-90\% of energy, latency and throughput) of training and inference of deep neural networks (DNNs), both for language and for vision models \citep{Kim2023FullSO,zhu_Google_MM_free2024}. Indeed, multiplying large matrices (e.g., 16Kx16K) is considered a prerequisite in any Generative-AI model \citep{Li24LLMs, overview_of_llms}, implying a billion-order FLOP count for merely a million-order IOs. As such, continual increase in computation and energy demands underlying AI breakthroughs poses a real scalability problem.

The reliance on \matmul s is mainly attributed to the emergence of hardware which was optimized for this task (GEMM) -- GPUs (and in particular, TensorCores \citep{nvidia2020a100, nvidia_tensor_cores}). This hardware allows for extremely efficient amortization of IO and parallelism of the cubic FLOPs ($\approx n^3$ for $n\times n$ matrices), making it feasible to finetune and run a 10B+ Transformer. Indeed, GPU-optimized training was a pivotal factor in the success of AlexNet \citep{AlexNet12} and hyper-scaling of DL ever since. This phenomenon, where an algorithmic paradigm  prevails because it is most suited to the available hardware and \emph{not necessarily} because it is theoretically superior to alternative ideas, is a widely-believed explanation to the rise of deep learning, that ``won the hardware lottery'' \citep{Hooker_HW_Lottey21}.

Consequently, most of the (massive) research and engineering efforts targeting inference speedups in DNNs attempt to reduce the complexity of \matmul s without major degradation in model accuracy (see \cite{DeepCompression15, DGO18, han2016deep,AFKLM23,SparseGPT23, li2022arbitrary, xiao2023robust} and references therein). This line of research can be divided into two broad categories, which we discuss next.

The first category is \textit{GPU-friendly compression} techniques, attempting to reduce the multiplication to smaller \matmul s or impose structure on the weight matrices (e.g., low-rank decomposition and linear sketching \citep{IndykLR19,zhang2017lowrank,llora22,performer21}, channel pruning \citep{he2017channel}, Tensor products \citep{Nvidia_tensor_algebra21},  Structured sparsification \citep{Hu_2_4_24,Wen_structured_sparsity16,HAB+21}, FFT-like structured weights \citep{JSKK_kronecker22, Kaleidescope20}). A major drawback of these approaches is that they dramatically reduce the \emph{number of trainable parameters} of the weight matrix, resulting in minor speedups for SoTA models, or a substantial loss of accuracy, even after aggressive finetuning \citep{huang22_SVD_fails, Moar24}.

The second category is using algorithmic techniques for approximate \matmul, which are \textbf{not} GPU-friendly, and require the development of \textit{new hardware}. For example, the use of product-quantization \citep{FB_PQ2020,fernandez2023are}, weight-sharing \citep{DS24} or \emph{unstructured} sparsification \citep{SparseGPT23, Wanda23, HAB+21} indicate that the number of parameters in many industry-scale models can be dramatically reduced with minor accuracy loss (up to $90\%$ sparsity in BERT \citep{oBert22}, but barely above $\sim 50\%$ in SoTA LLMs~\citep{SparseGPT23}). These techniques require specialized hardware and fail to provide real speedups on TensorCores, which is why they have been re-purposed for model \emph{compression} or CPUs \citep{compression_li23}. 
One exception to this category is weight quantization \citep{frantar2022tensor, frantar2023efficient, sun2023optimizing, dettmers2024quantized,zhu_Google_MM_free2024}, which is somewhat \textit{orthogonal} to our work, as it cannot yield asymptotic runtime saving in the dimension, but only of the bit-complexity, which remains $\Omega(n^3)$ for $n\times n$ \matmul . Moreover, quantization can be done in \emph{conjunction} with the method we present.

The above state of affairs explains why inference acceleration is such a notorious challenge in practice -- After all, GPUs are optimized for \matmul s, hence it appears that any \emph{generic} \matmul\;acceleration technique would simply boil down to multiplying  smaller matrices, inevitably decreasing the number of parameters of the network. This raises the following question, which our paper is dedicated to answer: 

\begin{question}\label{question-1}
Is there a bilinear operator $f(X,W)$ which is both faster than $\matmul(X,W)$ \underline{on a GPU}, and does \emph{not} decrease (even \emph{increases}) the number of trainable parameters?
\end{question}

Note this is a purely mathematical question, abstracting away accuracy-loss, which is highly \textit{task-specific}. For reference, the \emph{element-wise} (Hadamard) product of square matrices $X\odot W$ preserves the parameter count, but is not faster than \matmul\;on TensorCores (performing $\sim n^2$ IOs for $\sim n^2$ FLOPs has very low computational-intensity~\citep{nvidia_tensor_cores}). 

The only known architecture-independent \emph{GPU-efficient} inference acceleration technique, which doesn't \textit{drastically reduce} the parameter count of the model, is \emph{N:M structured sparsification} \citep{Hu_2_4_24,nvidia2020a100}. As such, we use \twofour\; as a baseline for our approach (quantization can be applied in conjunction with \twofour\;as well, which makes it an orthogonal axis of optimization, hence our experiments will not include quantization). Specifically, recent TensorCore generations (succeeding Ampere$^{\texttt{TM}}$) can reduce throughput (both FLOP and IO overhead) by up to a factor of $2$, when multiplying two matrices, \emph{one of which} has the following 50\% sparsity pattern, henceforth denoted \textbf{\twofour}. In each $4$ memory-consecutive matrix elements, at least two out of the four entries in the block must be zero. Deciding \emph{which} of the two entries in a block of the dense pre-trained weight matrix $W$ to zero out (and how to re-train the remaining non-zeroes) so as to minimize accuracy loss, is a nontrivial optimization problem \citep{Hu_2_4_24,Wen_structured_sparsity16,HAB+21}.

Our paper is devoted to answering \autoref{question-1} in the affirmative. We design a \emph{GPU-native and trainable} bilinear operator, whose evaluation requires $\sim n^3/c$ FLOPs (for arbitrary tunable parameter $c >1$, compared to $\sim n^3$ for $n\times n$ naive \matmul) and $\sim n^2$ IOs, while also preserving (often \textit{increasing}) the number of trainable parameters of the network. Thus, our operator, termed \textit{Strassen-Tile} ($\STL$) is more efficient on GPUs than \matmul, while potentially improving the network's expressivity. We analyze some basic properties of this operator, and show that optimizing the operator is a non-trivial optimization problem.

\paragraph{Related Work.}
Our work bridges two lines of research on fast matrix multiplication: (i) A practical line of work attempting to implement \emph{exact} FMM algorithms (a-la Strassen) for \matmul\;on existing hardware \citep{ALW24, MK22,strassen_reloaded}; and (ii) A recent theoretical line of work which studies an \emph{approximate} version of Strassen's tensor-decomposition for the \matmul\;tensor \citep{PUW25, AZ23}, obtained by tweaking the tensor to have faster (asymptotic) runtime with \textit{provable} error guarantees. We combine the approaches to obtain practical algorithms with provable guarantees.

Several recent works \citep{FNO24, StrassenNets18, FNET21, MLPMixer, anandkumar14b, FNET21, Kaleidescope, monarch_mixer} consider tensor-algebra products or other bilinear maps as alternatives for \matmul s, suggesting that they are more ``expressive'' for learning high-dimensional non-Euclidean data, akin to Kernel methods in ML. While similar in spirit to our work, these techniques still suffer from parameter reduction or require new hardware.

Our work is most directly influenced by the work of \citet{StrassenNets18}, who proposed to extend Strassen's Fast-Matrix-Multiplication (FMM)  framework~\citep{s86} to \emph{learn a \underline{universal} ternary} ($\{ -1,0,+1\}^{r\times n}$) operator, resulting in a multiplication-free operation (an approach which has gained more interest and success very recently \citep{zhu_Google_MM_free2024}). The key difference in our approach is to apply linear transformations locally on  \emph{tiles}, which amortizes the cost of these basis-transformations. This key feature turns this method into a GPU-native operator, and allows us to train \emph{unrestricted} tile-transformations over the Reals (via SGD finetuning), which is computationally infeasible using the ``universal'' approach of \citet{StrassenNets18}.

\paragraph{Structure of the Paper.}
In \autoref{sec_SNF} we survey the necessary background for $\STL$. In \autoref{sec_STL} we present the $\STL$ operator and analyze its complexity. In \autoref{sec_STL_exp} we present a few experiments that corroborate our analysis. In \autoref{sec_STL_parameter_increase} we shortly discuss the increase in trainable parameters count, which is discussed in more detail in the supplementary material. In \autoref{sec_theory} we discuss the source of $\STL$'s initialization points and its theoretical foundations.

\section{Strassen Normal Forms}\label{sec_SNF}

In his seminal work, \citet{s86} proved that an operator $f(X,W): \R^{n\times k} \times \R^{k\times m} \to \R^{n\times m}$ is bilinear if and only if it can be written in the following canonical form, called the \emph{Strassen normal form} (SNF):
\begin{align}\label{eq_SNF}
f(X,W) = \bD^\top(\EX \v(X)\odot \EW\v(W)),
\end{align}
where $\EX\in \R^{r\times nk}$, $\EW\in \R^{r\times mk}$, $\bD \in \R^{r\times mn}$ are universal linear transformations (``$X$-encoding'', ``$W$-encoding'' and ``decoding'' matrices, respectively), $\v(X) \in \R^{nk}$ is the \emph{vectorized} matrix $X$ (similarly for $\v(W)$), and $\odot$ denotes the \emph{element-wise} (Hadamard) multiplication of vectors. 

The reason we restrict $f$ to be bilinear, besides capturing a very large class of functions \citep{s86,Nvidia_tensor_algebra21}, is that ultimately, we \emph{do} wish to take advantage of GPUs (TensorCores) to compute $f(W,X)$ fast. While the Hadamard product in \eqref{eq_SNF} is a very inefficient GPU operation, we will show in the next section that a \emph{tiled} variation of \eqref{eq_SNF} can be \emph{efficiently} computed on a GPU.



\section{Strassen-Tile Operator $\STL$}\label{sec_STL}

Fix some prescribed parameter $r=n^2c$ for $c>1$. A natural idea, inspired by \citet{StrassenNets18}, is to \textbf{learn} a bilinear operator instead of \matmul\;through its SNF \eqref{eq_SNF} as part of the layer's parameters. This way $c$ governs the number of FLOPs. We can apply Stochastic Gradient Descent (SGD) to finetune the parameters, by taking gradients with respect to $\EX,\EW,\bD,W$ (where $W$ is the network's weights matrix). This is possible since a bilinear function is differentiable w.r.t. the encoder / decoder matrices of any SNF \eqref{eq_SNF} presentation of the function.

There are two substantial setbacks for implementing this idea:
\begin{enumerate}
    \item \textbf{Changing base is too expensive}: Suppose $X,W$ are $n\times n$ matrices, then computing the products $\EX \cdot \v(X),\EW \cdot \v(W)\in \R^{r\times n^2}$ requires $n^2r\sim n^4c\gg n^3$ FLOPs. Since the optimization is unrestricted, we cannot assume the matrices have useful structure.
    \item \textbf{Mat-Vec and Element-wise multiplication are too expensive}: As discussed before, computing the Hadamard product of vectors is highly inefficient on a GPU. Moreover, computing the SNF \eqref{eq_SNF} directly, requires computing a Mat-Vec product with a vector of size $n^2$, which incurs huge IO cost.
\end{enumerate}

A natural way to overcome the aforementioned setbacks is to learn the SNF \eqref{eq_SNF} \textbf{on small tiles}. This can be interpreted as a one level divide-and-conquer algorithm. For convenience, we introduce the following notation. For $M\in \R^{n\times m}$, assuming $m,n$ are divisible by $t$, we can view $M$ as an element of $(\R^{t\times t})^{n/t\times m/t}$ via tiling, i.e., we view it as a $n/t \times m/t$ matrix whose elements are from the algebra $\R^{t\times t}$ (\textit{tiles}). We use lower-case letters $i\in [n],j\in [m]$ to denote \textit{scalars} $M_{i,j}\in \R$ and upper-case letters $I\in [n/t],J\in [m/t]$ to denote \textit{tiles} $M_{I,J}\in \R^{t\times t}$.

\begin{definition}[$\STL$]\label{def_STL} 
Let $n,k,m\in\N$, $t\in \N$ (tile-size), $\N\ni r>t^2$ (tensor-rank), $(\EX,\EW,\bD)\in \R^{r\times t^2}$ (encoders / decoder). Assume $t$ divides $n,k,m$. Define $\STL:\R^{n\times k}\times \R^{k\times m}\to \R^{n\times m}$, denoted $\STL(X,W)=X\diamond W$, by setting for $I\in [n/t],J\in [m/t]$:
\ifdefined\AIS
\begin{align}\label{eq_STL}
    &\v(X\diamond W)_{I,J} := 
    \notag 
    \\& \bD^\top \left( 
    \sum_{L=1}^{k/t} 
    (\EX \cdot \v( X_{I,L})) \odot (\EW\cdot \v( W_{L,J}))
    \right).
\end{align}
\fi
\ifdefined\ICLRorARX
\begin{align}\label{eq_STL}
    \v(X\diamond W)_{I,J} := \bD^\top \left( 
    \sum_{L=1}^{k/t} 
    (\EX \cdot \v( X_{I,L})) \odot (\EW\cdot \v( W_{L,J}))
    \right).
\end{align}
\fi
\end{definition}

Note that the decoding step can be computed \textbf{once} for every \textit{output} tile, by linearity. Moreover, we call the sum in the RHS of \eqref{eq_STL} the \textit{encoding of $(X\diamond W)_{I,J}$}.

\begin{remark}
    Since $W$ is constant, we can keep only the encoded tiles in memory, i.e., store $w_{I,J}=\EW\cdot \v(W_{I,J})\in \R^{r}$. Moreover, we can learn directly on $w_{I,J}$ instead of on $\EW,W$ separately. As discussed in \autoref{sec_STL_parameter_increase}, this can lead to a parameter increase. We call this the \textbf{Fake Encoding} of $W$, since it does not originate from an encoding, but plays the same role.
\end{remark}

\subsection{FLOPs Complexity Analysis of $\STL$}\label{subsec_complexity_STL}
Let $X\in \R^{n\times k},W\in \R^{k\times m}$ as before. Let $T(r,t)$ denote the cost (in FLOPs) of a single encoding / decoding matrix-vector multiplication (matrix of size $\R^{r\times t^2}$ and vector of size $t^2$, which is the vectorization of a tile). Since we don't assume any structure on our encoding and decoding matrices, we may assume w.l.o.g that $T(r,t)=\Theta( t^2r)$. In this notation, the runtime of computing $X\diamond W$ is:
\begin{align}\label{eq_FLOPs_STL}
    \frac{nk}{t^2} \cdot T(r,t) + 
    \frac{mk}{t^2} \cdot T(r,t) + 
    \frac{mn}{t^2} \cdot T(r,t) + 
    \frac{nkm}{t^3}\cdot r. 
\end{align}
In the special case of square $n\times n$ matrices, plugging in $T(r,t) = O(t^2r)$ into \eqref{eq_FLOPs_STL} simplifies to  $O(r(3n^2+n^3/t^3))$. One can easily verify that as long as $n > 3t^3$, which will be the case as we will be working with small tiles ($t = 4,8,16$), the second term dominates the first. Thus, the \emph{amortized} cost of the encoding and decoding transformations is essentially ``free'' so long as $n\gg t$. In this case, the overall complexity of $\STL(X,W)$ for square $n\times n$ matrices becomes $O\left(rn^3 / t^3\right)$. Hence, the speedup factor over the $O(n^3)$ naive \matmul\;runtime is approximately $(r/t^3)^{-1}=t/c$. We sum this up in the following corollary:

\begin{corollary}\label{flop_estimate}
    Assuming $n\gg t$, the FLOP count of $\STL$ for square matrices is $O(rn^3/t^3)=O(n^3c/t)$.
\end{corollary}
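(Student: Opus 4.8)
The plan is to specialize the general FLOP expression \eqref{eq_FLOPs_STL} to the square case and then identify the dominant term. First I would substitute $m = k = n$ into \eqref{eq_FLOPs_STL}: the first three summands, which account for encoding the tiles of $X$, encoding the tiles of $W$, and decoding the output tiles, collapse to $3\,(n^2/t^2)\,T(r,t)$, while the last summand, which accounts for the per-triple Hadamard product and accumulation appearing in \eqref{eq_STL}, becomes $(n^3/t^3)\,r$. Using the unstructured-transform cost $T(r,t) = \Theta(t^2 r)$ — legitimate here since no structure is imposed on $\EX,\EW,\bD$ — the encoding/decoding contribution is $\Theta(3 n^2 r)$, so the total is $\Theta\!\left(r(3n^2 + n^3/t^3)\right)$.

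Next I would compare the two terms inside the parentheses: $n^3/t^3 \ge 3n^2$ precisely when $n \ge 3t^3$, and since we operate in the regime $n \gg t$ with small constant tile sizes ($t \in \{4,8,16\}$), this inequality holds, so $3n^2 + n^3/t^3 = \Theta(n^3/t^3)$ and the total cost is $\Theta(r n^3/t^3)$. In $O(\cdot)$ notation this is the first claimed bound $O(r n^3/t^3)$; substituting the chosen rank $r = t^2 c$, where $c > 1$ is the over-parametrization factor relative to the trivial rank $t^2$ of exact $t \times t$ \matmul, then yields $O(t^2 c\, n^3/t^3) = O(n^3 c/t)$, which is the second form.

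I do not expect a real obstacle, since the statement is essentially a restatement of the displayed computation preceding it. The only point requiring care is the bookkeeping behind \eqref{eq_FLOPs_STL} itself: one must confirm that, by linearity (as noted in the remark following Definition~\ref{def_STL}), the decode $\bD^\top$ is applied once per output tile \emph{after} summing over $L$, giving $mn/t^2$ decodes rather than $mnk/t^3$, and that each $(I,J,L)$ triple contributes exactly $r$ FLOPs for the Hadamard-and-add step, for a total of $(nkm/t^3)\,r$. Everything else is routine asymptotic simplification.
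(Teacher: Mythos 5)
Your proposal is correct and follows essentially the same route as the paper: specialize \eqref{eq_FLOPs_STL} to $m=k=n$, use $T(r,t)=\Theta(t^2 r)$ to get $O\!\left(r(3n^2+n^3/t^3)\right)$, note that the cubic term dominates once $n>3t^3$ (automatic for small constant tile sizes), and substitute $r=ct^2$ to obtain $O(n^3 c/t)$. Your extra bookkeeping check (one decode per output tile by linearity, $r$ FLOPs per $(I,J,L)$ triple) is consistent with how the paper arrives at \eqref{eq_FLOPs_STL}, so nothing further is needed.
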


\subsection{GPU-Friendly Implementation of the Element-Wise Product}
As mentioned earlier, the element-wise product in \eqref{eq_STL} has very low GPU utilization. In order to compute \eqref{eq_STL} efficiently on TensorCores, we suggest the following approach. First, for every $p\in[r]$ we define two matrices -- $X\ssp\in \R^{n/t\times k/t},W\ssp\in \R^{k/t\times m/t}$ -- obtained by extracting the $p$-th entry of all $nk/t^2$ (resp. $km/t^2$) encoded tiles of $X$ (resp. $W$). By abuse of notation, we use upper-case indices for $X\ssp,W\ssp$, and formally define $X\ssp_{I,J}:=(\EX\cdot \v(X_{I,J}))_p$ (similarly $W\ssp_{I,J}:=(\EW\cdot \v(W_{I,J}))_p$). Second, we similarly define $Y\ssp$ to be the extraction of the $p$-th entry of the encoded tiles of $Y:=X\diamond W$, i.e., before decoding. Thus, it is given by $Y\ssp_{I,J}:=\left(\sum_{L=1}^{k/t}(\EX\cdot \v(X_{I,L})\odot (\EW\cdot \v(W_{L,J})\right)_p$. The crucial observation is:
\begin{claim}\label{eq_STL_MatMul}
    $Y\ssp = X\ssp W\ssp$, i.e., it is just a standard \matmul\;.
\end{claim}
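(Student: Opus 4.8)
The plan is to prove the identity entry-by-entry, by unwinding the definitions of $Y\ssp$, $X\ssp$, and $W\ssp$ and invoking two elementary facts about coordinate projections. First I would fix a coordinate $p\in[r]$ and tile-indices $I\in[n/t]$, $J\in[m/t]$, and write out $Y\ssp_{I,J}$ from its definition as the $p$-th entry of $\sum_{L=1}^{k/t}(\EX\cdot \v(X_{I,L}))\odot(\EW\cdot \v(W_{L,J}))$.

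Second, I would use that taking the $p$-th coordinate of a vector is a linear functional, hence commutes with the finite sum over $L$; this turns $Y\ssp_{I,J}$ into $\sum_{L=1}^{k/t}\big((\EX\cdot \v(X_{I,L}))\odot(\EW\cdot \v(W_{L,J}))\big)_p$. Third, I would use that the $p$-th entry of a Hadamard product $u\odot v$ of two vectors equals $u_p v_p$, which gives $\sum_{L=1}^{k/t}(\EX\cdot \v(X_{I,L}))_p\cdot(\EW\cdot \v(W_{L,J}))_p$. By the definitions $X\ssp_{I,L}=(\EX\cdot \v(X_{I,L}))_p$ and $W\ssp_{L,J}=(\EW\cdot \v(W_{L,J}))_p$, this last expression is exactly $\sum_{L=1}^{k/t}X\ssp_{I,L}W\ssp_{L,J}$, i.e., the $(I,J)$ entry of the ordinary matrix product of $X\ssp\in\R^{n/t\times k/t}$ and $W\ssp\in\R^{k/t\times m/t}$. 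Since $(I,J)$ ranges over all entries of an $n/t\times m/t$ matrix, this establishes $Y\ssp=X\ssp W\ssp$.

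There is no real technical obstacle here: the entire content is bookkeeping of the tiling/indexing convention together with the two trivial coordinate identities above. The only point that warrants a sentence of care is checking that the dimensions line up — that the ``scalarized'' matrices $X\ssp,W\ssp$ have inner dimension $k/t$, matching the range of the summation index $L$ in \eqref{eq_STL}, so that the matrix product is well defined and has the same shape $n/t\times m/t$ as $Y\ssp$. One may also remark that this identity is precisely the point of the construction: the $r$ independent \matmul s $\{X\ssp W\ssp\}_{p\in[r]}$ can be batched into a single large GEMM call, which is how the element-wise product in \eqref{eq_STL} is rendered GPU-native.
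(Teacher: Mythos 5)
Your proposal is correct and follows essentially the same route as the paper's proof: fix $p,I,J$, pull the $p$-th coordinate through the sum over $L$, use that the $p$-th entry of a Hadamard product is the product of the $p$-th entries, and recognize $\sum_L X\ssp_{I,L}W\ssp_{L,J}$ as the $(I,J)$ entry of $X\ssp W\ssp$. The only additions (the dimension check and the GPU-batching remark) are harmless commentary beyond what the paper writes.
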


\newcommand{\proofOfEqSTLMatMul}{
\begin{align*}
    Y\ssp_{I,J}  &= \left(\sum_{L=1}^{k/t}(\EX\cdot \v(X_{I,L})\odot (\EW\cdot \v(W_{L,J})\right)_p 
    \\ & = \sum_{L=1}^{k/t}(\EX\cdot \v(X_{I,L})_p \cdot (\EW\cdot \v(W_{L,J})_p\\ &=\sum_{L=1}^{k/t}X\ssp_{I,L}W\ssp_{L,J}=(X\ssp W\ssp)_{I,J}.
\end{align*}
}
\ifdefined\AISorARX
    \begin{proof}
    \proofOfEqSTLMatMul
    \end{proof}
\fi
\ifdefined\ICLR
    The proof is provided in \autoref{app:gpu-stl}.
\fi
Thus, computing \textit{all} element-wise products, reduces to $r$ \matmul s. We observe that $\v((X\diamond W)_{I,J})=\bD^\top \begin{bmatrix}
    Y^{(1)}_{I,J} & Y_{I,J}^{(2)}&\cdots & Y_{I,J}^{(r)}
\end{bmatrix}^{\top}$.

\subsection{GPU Complexity Analysis}
Building on the GPU-Friendly implementation of the element-wise product, we present a refined performance analysis on GPUs. For simplicity, we assume $n=k=m$. We assume the input matrix $X$ is given as a $3$D-tensor of shape $(n/t,n/t,t^2)$, where $\v(X_{I,J})$ is indexed by $[I,J,:]$ (we use square brackets for the tensor indexing). Moreover, we assume the weights matrix $W$ is given in \textbf{encoded} form as a $3$D-tensor of shape $(n/t,n/t,r)$, where $\EW\cdot \v(W_{I,J})$ is indexed by $[I,J,:]$. Computing $X\diamond W$ from this starting point can be done in three steps: \textbf{(i)} Encode, via \matmul, the tiles of $X$, obtaining $X\ssp$ for every $p\in [r]$; \textbf{(ii)} For each $p\in [r]$, compute $X\ssp W\ssp$ via \matmul, giving the encoded output $Y\ssp$; \textbf{(iii)} Decode, via \matmul, each tile of $X\diamond W$ from $\{Y\ssp\}_{p\in [r]}$. \texttt{PyTorch} pseudo-code for this algorithm is presented in the supplementary material.

For a matrix $M$ let $\abs{M}$ denote the number of bytes needed to store $M$. We analyze each step, assuming ideal hardware and perfect parallelization:
\begin{enumerate}
    \item \textbf{Step (i)}: The IO cost is $\abs{X}+\abs{\EX}$ for read, $\sum_{p\in [r]}\abs{X\ssp}$ for write. Note that $\abs{\EX}$ is negligible compared to $\abs{X}$ ($rt^2$ compared to $n^3$), while the latter writing size dominates $\sum_p\abs{X\ssp}=(r/t^2)\cdot \abs{X}$. Hence the total IO byte load of this step is $\IO_1\approx \abs{X}\cdot (1+r/t^2)$. The total number of FLOPs of this step is $2(n/t)^2\cdot t^2\cdot r$, as each of the $(n/t)^2$ tiles of $X$ is mapped to $r$ dimensions by a linear transformation, hence $\FLOP_1=2n^2r$.
    \item \textbf{Step (ii)}: This step consists of $r$ independent \matmul s, each of squared matrices of size $n/t$. Reading the matrices requires IO $\sum_p (\abs{X\ssp}+\abs{W\ssp})$, and writing the output requires IO $\sum_p \abs{Y\ssp}$. Since all matrices have the same shape, we conclude the total IO byte load is $\IO_2\approx 3(\sum_p\abs{X\ssp})=3\abs{X}\cdot (r/t^2)$. The total number of FLOPs is $\FLOP_2=2r\cdot (n/t)^3$.
    \item \textbf{Step (iii)}: Same analysis as step (i), with the roles of input and output reversed. Thus, $\IO_3\approx \abs{X}\cdot (1+r/t^2)$ and $\FLOP_3=2n^2r$.
\end{enumerate}

Overall, we obtain $$\IO_{\STL}\approx \abs{X}\cdot (2+5r/t^2),\quad \FLOP_{\STL}=4n^2r+2n^3\cdot (r/t^3).$$
At the same time, naive matrix multiplication of $X$ and $W$ requires $\IO_{\text{naive}}=3\abs{X},\FLOP_{\text{naive}}=2n^3$. Note that $\FLOP_2$ dominates when $n\gg t^3$, which is our regime of interest. Thus the asymptotic speedup in FLOPs is by a factor of $t^3/r=t/c$.

\begin{example}
    Let us specialize for $t=4$, $n=8192$ and $r=32$. We get $\IO_{\STL}\approx 12 \abs{X}$ and $\IO_{\text{naive}}=3\abs{X}$, which is a $4$-fold increase in IO load moving to $\STL$. Assuming FP16 calculations, $\abs{X}=2\times 8192^2\approx 1.3\times 10^8$ (bytes). On the other hand, $\FLOP_{\STL}\approx 5583\times 10^8$ and $\FLOP_{\text{naive}}\approx 10995\times 10^8$, suggesting an almost $2$-fold speedup.
\end{example}

In practice, DNNs often chain multiple linear layers, interleaved with non-linear activations. For $\STL$, we could fuse (in the sense of CUDA kernel implementation) step (iii) of the previous layer with step (i) of the current layer, reducing the IO load.

It is difficult to use these  estimates  to  predict the actual speedup that $\STL$ can give, because this depends on the hardware kernels that are used for executing the computation, usage of cache and other intricate factors that affect performance. 
Therefore, we have measured the \textit{actual runtime} required to compute $\STL$ matrix multiplication versus vanilla matrix multiplication, for various values of $n$ and $r$ (keeping $t=4$), and using standard CUDA profiling tools, on H100 architecture with FP16 data type.\footnote{We have done these calculations assuming the fused step of (i) and (iii).} 
\ifdefined\ICLR
The interested reader may find a figure describing the resulting estimates in \autoref{app:gpu-stl}.
\fi
\ifdefined\AISorARX
The results are summarized in \autoref{performance_plot}.
    \begin{figure}
        \centering
        \ifdefined\ARX
            \includegraphics[width=0.8\textwidth]{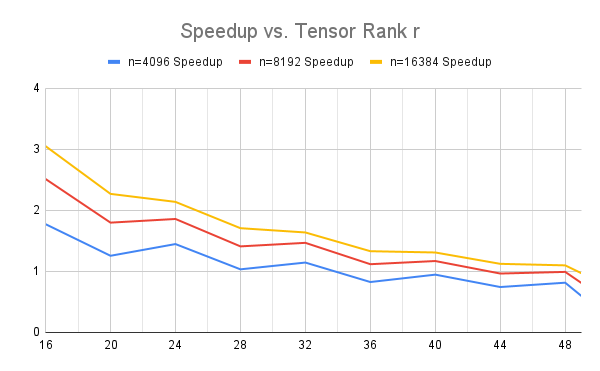}
        \fi
        \ifdefined\AIS
            \includegraphics[scale=0.4]{speedup.png}
        \fi
                
        \caption{Observed speedup factor for $\STL$ for tile size $t=4$, $r=16,20,\ldots,48,49$, and matrices of size $n \times n$ with $n=4096, 8192, 16384$, using a (non-optimized) \texttt{PyTorch} implementation. As expected, the throughput speedup is almost linear in the tensor rank. (Note: $r=49$ can imitate exact matrix multiplication by Strassen, and is given here for completeness.)}
        \label{performance_plot}
    \end{figure}
\fi


\subsection{Conclusion}
The key properties of the $\STL$ operator $X\diamond W$ are summarized as follows: 

\textbf{Amortization of Encoding}: Each $t\times t$ tile of $X$ and $W$ is encoded only \emph{once}, but used $n/t$ (respectively $m/t$) times in the $\STL$ product. As such, the \emph{cost of encoding} is \textit{amortized}, assuming $n\gg t$.

\textbf{High GPU utilization}: Assuming $n\gg t^3$, $\STL$ can achieve a similar FLOPs per IOs ratio, compared with naive \matmul.
    
\textbf{Parameter Increase}: This is discussed in \autoref{sec_STL_parameter_increase}. Unlike low-rank, sparse or product-quantization (PQ) approximations, $\STL$ does \textbf{not} decrease (often increases) the number of trainable parameters of the original linear layer, yet it is \emph{cheaper} than $\matmul(X,W)$ on GPUs.

\section{Experiments}\label{sec_STL_exp}
We perform two classes of experiments. An implementation of $\STL$ we've used in our experiments (\texttt{PyTorch} implementation) is available in our public \publicRepo.
\begin{itemize}
    \item[Class 0] Training encoders and decoders for $\STL$ with tile size $4$ to approximate $4\times 4$ matrix multiplication, in the vein of ``approximate Strassen Matrix Multiplication''. The resulting matrix multiplication residual error is compared against that of \twofour\;pruning for the same synthetic random data. We choose 4x4 tiles because it's the simplest scenario for comparing $\STL$ and \twofour, and it suffices because the tiling approach extends the findings to larger matrices.
    
    \item[Class 1] Training from scratch a base \emph{untrained} network, replacing linear layers with $\STL$ on tiles of size $t=4$ and various values of tensor rank $r$. The parameters of the $\STL$ encoders and decoders were also trained. For budget and time reasons we worked with vision transformers of the ``Token-to-Token'' class \citep{yitu2021t2tvit} with up to $\sim 4.3$M parameters on ImageNet-1K dataset \citep{ImageNet1K}.  
\end{itemize}

\subsection{Class 0 Experiments - Synthetic $4\times 4$ matrices}\label{subsec_exp_class_0}

The first experiment attempts to approximate matrix multiplication of random (Gaussian) $4\times 4$ single-tile ($t=4$)  matrices $X, W$ using $\STL$ for various values of tensor rank $r$, using the Frobenius norm squared of the residual matrix as a loss function, and training on the encoders $\EX,\EW$ and decoder $\bD$. We used a magnitude based $\twofour$ pruning strategy on the weight matrices $W$ as benchmark to compare with. As can be seen from \autoref{alpha_STL_vs_2_4}, we need $r\approx 42$ for $\STL$ to match $\twofour$ pruning.  
We refer the reader to the supplementary materials for more details.

The result of this experiment is not promising because tensor rank of $r=42$ for tile size $t=4$ is unlikely to provide much benefit, if any, from a performance point of view. The result discussed in \autoref{sec_STL_parameter_increase} explains why we get these rather ``disappointing'' results, and suggests that when switching the objective function (to real-life AI objectives) and training a network with $\STL$, we can hope to match the \twofour\;performance with lower $r$. In fact, as we shall see next, we can even hope to surpass the baseline (linear layer, without $\STL$) with $r$ as low as $24$.

\ifdefined\ICLR
    \begin{figure}
    \centering
    \begin{minipage}{0.485\textwidth}
        \centering
        \includegraphics[width=\linewidth]{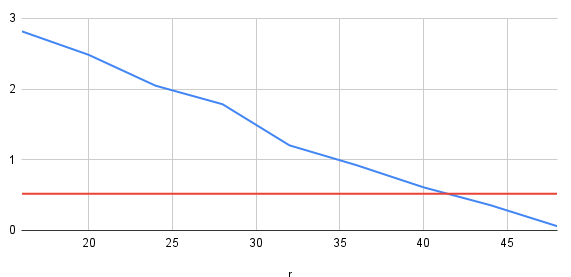}
\fi
\ifdefined\AISorARX
    \begin{figure}
        \centering
        \includegraphics[scale=\ifdefined\AIS0.5\else0.8\fi]{alpha_STL_2_4.png}
\fi
        \caption{The blue line is estimation of the error of $\STL$, with tile $t=4$ and tensor rank $r$ from 16 to 48 (for 49, the line is known to cross 0 by Strassen). The red line is our estimate for the corresponding error for $\twofour$ pruning. 
        \label{alpha_STL_vs_2_4}}
\ifdefined\ICLR
    \end{minipage}\hfill
    \begin{minipage}{0.485\textwidth}
        \centering
        \includegraphics[width=\linewidth]{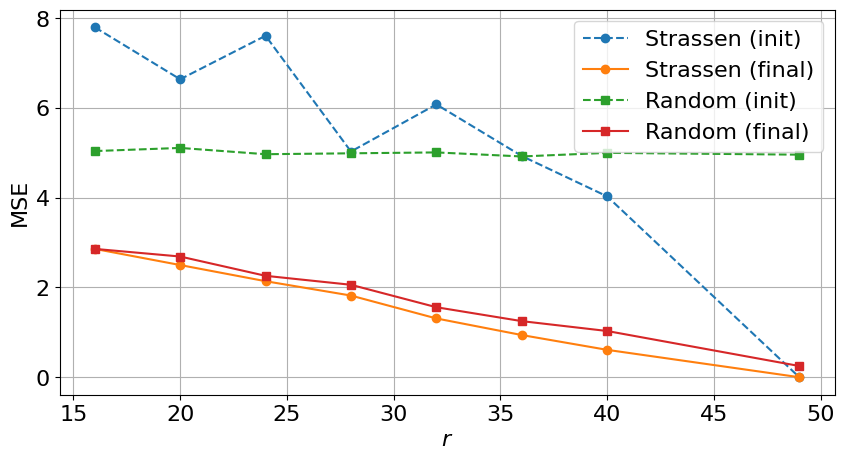}
\fi
\ifdefined\AISorARX
    \end{figure}
    \begin{figure}
        \centering
        \includegraphics[width=\linewidth]{strassen_vs_random_initialization.png}
\fi
        \caption{Mean squared error $\downarrow$ of training $\STL$ with Strassen based vs. random Gaussian initialization, compared for different values of $r$ before and after training. This shows ``smart'' initialization maintains an advantage, and is consistent with other methods we have tried.}
        \label{fig:strassen_vs_random}
\ifdefined\AISorARX
    \end{figure}
\fi
\ifdefined\ICLR
    \end{minipage}
    \end{figure}
\fi

For completeness, \autoref{fig:strassen_vs_random} shows how initialization of the experiment changes the outcomes. In particular, it shows that ``smart'' initializations are important to achieve good performance, which is evidence to the non-triviality of the optimization problem at hand.

\subsection{Class 1 Experiments - Training From Scratch with $\STL$}\label{subsec_exp_class_1}
As a model, we experimented with the image classification network T2T-ViT-7 \citep{Yuan_2021_ICCV}  which has 4.3M parameters (requiring 1.1G FLOPS per 224x224 image). The first step was to repeat the results as reported by \citet{Yuan_2021_ICCV}. We managed to obtain 71.5\% accuracy, which is 0.2\% lower than claimed there. We attribute this to possible noise stemming from random initialization\footnote{\citet{Yuan_2021_ICCV} mention on the \texttt{github} project page that using $4$ GPUs gives slightly lower accuracy than using $8$, which may explain the slightly lower baseline we saw when running their code, as we used $4$ GPUs.}.

\ifdefined\ICLR
    \begin{wrapfigure}{r}{0.5\textwidth}
\fi
\ifdefined\AISorARX
    \begin{figure}[h!]
\fi
        \centering
        \ifdefined\ICLR
            \includegraphics[width=0.5\textwidth]{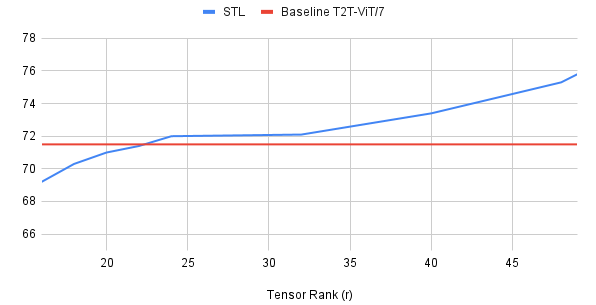}
        \fi
        \ifdefined\AISorARX
            \ifdefined\AIS
            \includegraphics[scale=0.4]{acc-vs-r-T2T-ViT-7.png} \else 
            \includegraphics[width=\linewidth]{acc-vs-r-T2T-ViT-7.png}
            \fi
        \fi
                
        \caption{Accuracy $\uparrow$ vs. tensor rank ($r$) for baseline T2T-Vit/7 and for the same architecture with all activation-weight \matmul s in the network trunk replaced by $\STL$ with tensor rank $r$. Very similar picture when including the pre-trunk (T2T) network as a followup experiment.}
        \label{fig_t2t_vit_7_results}
\ifdefined\ICLR
    \end{wrapfigure}
    \par\noindent
\fi
\ifdefined\AISorARX
    \end{figure}
\fi

Following the baseline result reproduction, we replaced the two MLP linear layers in each of the $7$ attention blocks in the network by $\STL$ with $r=16, 24, 32$. For the case $r=16$ we lost around $2\%$ accuracy compared to base, but for $r=24,32$ we improved by close to $0.5\%$ compared to base. Encouraged by this, we replaced not just the MLP linear layers, but also the Q,K,V and the projection linear layers from the attention, thus removing all linear layers from the network trunk, which accounts for 79\% of the FLOPS of the entire network\footnote{By trunk we mean the $7$ attention blocks. At this point, we did not make replacements in the T2T (Token-to-token) layers preceding the attention blocks, which account for the remaining 21\% of the network FLOPS.}. We also did \textit{not} replace activation-$\times$-activation \matmul\;which we note is easily done with $\STL$ but extremely hard to do with $\twofour$ sparsification, as it requires on-the-fly sparsification. We used $r=16,18,20,22,24,32,40,48,49$ (the latter case allowing exact matrix multiplication by Strassen) and summarized the results in \autoref{tab:table_t2t_vit} and \autoref{fig_t2t_vit_7_results}.

Note that there was no need to adjust any learning parameters. The parameters suggested in the code repository of \citet{Yuan_2021_ICCV} for T2T-ViT-7 worked out of the box. The results clearly show that, as long as the tensor rank is at least $\approx 24$, we gain accuracy compared to baseline as we increase the tensor rank $r$, and this is likely due to the increase of parameters in the fake encoding parameters (see \autoref{sec_STL_parameter_increase}). For $r< 24$ we lost accuracy points, probably due to loss of expressivity in $\STL$, compared to matrix multiplication, at such a low tensor rank regime.

Further encouraged by these results for $r\geq 24$, we have also replaced all the activation-$\times$-weight linear layers in the T2T part of T2T-ViT/7 with $\STL$, appearing before the network \emph{trunk}. We tested the values $r=16,24,32,40,48$. For $r=16$ we lost an additional 0.7\% from this replacement. For $r=24\ (32)$ we gained (lost) insignificantly $\leq 0.1\%$, respectively. For $r=40,48$ we gained $>0.4\%$ in each case. This further strengthens our observations about the effect of $\STL$ replacement in this regime.

\autoref{tab:table_t2t_vit} summarizes the results in the most ambitious experiment of replacing all linear layers in the body of the network. We provide more technical details about this experiment in the supplementary material.

We are not aware of other work that reports \emph{improvement} on the Imagenet-1k classification problem, when trained from scratch on the Imagenet-1k training split, with a network of similar size, and with weight matrix pruning set at a considerable sparsity rate (in particular structured \twofour pruning strategies). There are cases where pruning improves accuracy due probably to a regularization effect in the overfitting regime, which is not the case here.

\subsection{Conclusion}
To summarize, our conclusion from the Class 1 experiment is that in the \textit{under-parameterized} or at most \textit{slightly over-parameterized} case, there is potential of saving factor $>2.1$ in FLOPs without any loss of accuracy, and in some cases a slight \textit{gain}. However, in the extremely \textit{over-parameterized} case, switching to $\STL$ might cause loss of accuracy, due to even higher over-parameterization.

An interesting avenue for future research is to study \textit{larger} ViT architectures for images and/or video, where the dimensions of the matrices justify the use of $\STL$ from a performance point of view as well. Another avenue for further experiments is to replace the activation-$\times$-activation \matmul s appearing twice in each attention layer: Once for computation of the so-called \textit{attention matrix}, and again when multiplying the latter with the $V$ (as in QK{\bf V}) matrix. We provide more details in the supplementary material.

\newcommand{\classOneTable}{
    \begin{tabular}{|c|c|}
    \hline
    Variant & Accuracy $\uparrow$ \\
    \hline
    T2T-ViT-7 (Baseline) & 71.5\%  \\
    \hline 
    T2T-ViT-7/STL $r=16$ everywhere & 69.5\% \\
    \hline
    T2T-ViT-7/STL $r=18$ everywhere & 70.3\% \\
    \hline
    T2T-ViT-7/STL $r=20$ everywhere & 71.0\% \\
    \hline
    T2T-ViT-7/STL $r=22$ everywhere & 71.4\% \\
    \hline
    T2T-ViT-7/STL $r=24$ everywhere & 72\% \\
    \hline
    T2T-ViT-7/STL $r=32$ everywhere & 72.1\% \\
    \hline
    T2T-ViT-7/STL $r=40$ everywhere & 73.4\% \\
    \hline
    T2T-ViT-7/STL $r=48$ everywhere & 75.0\% \\
    \hline
    T2T-ViT-7/STL $r=49$ everywhere & 75.8\% \\
    \hline
    \end{tabular}
}
\newcommand{\classOneTableCaption}{
Results for the T2T-ViT/7 model with and without $\STL$ replacements.  Notice how at the extreme $r=49$, which can recover exact matrix multiplication, we gain accuracy, most likely due to the increased expressivity from the increased parameter count.
}

\ifdefined\AISorARX
\begin{table}
\begin{center}
\ifdefined\AIS
\begin{minipage}{0.45\textwidth}
    \classOneTable
\end{minipage}
\else
    \classOneTable
\fi
\end{center}
\caption{\classOneTableCaption}
\label{tab:table_t2t_vit}
\end{table}
\fi

\section{Parameter Increase in $\STL$}\label{sec_STL_parameter_increase}
As mentioned before, $\STL$ does not only trade off IO and FLOPs, but also the trainable parameter count, which is a measure of the \textit{expressivity} of the network. The parameters $\EX,\EW,\bD$ offer a negligible addition of parameters to the network. However, as commented before, when training the network with $\STL$, we are free to train directly over $W$ in its encoded form. For every tile of $W$ we have $r\ge t^2$ encoding dimensions, which we refer to as the \textit{Fake Encoding} of $W$. The term comes to emphasize that the vectors cannot be written as the encoding of $W$'s tiles with $\EW$. A priori, this increases the number of parameters by a factor of $c=r/t^2$. 

In the supplementary material, we formally state and prove the following result: Assuming $W$ is a fixed weights matrix, $\EX,\bD$ are also fixed, and $X$ is sampled from a distribution $\cD_X$, then optimizing over the \textit{fake encoding} of $W$ to minimize the $L^2$-difference compared to \matmul, is an optimization problem with the same number of parameters as in $W$. In other words, there is no effective increase in the number of parameters of the network, if we train over the fake encoding instead of $\EW$.

We make two observations on this result. First, different objective functions (e.g. cross-entropy loss of a network) might prove to have a different effect on the parameter increase, and $L^2$ might be a special case. Second, the result suggests that training $\STL$ \textit{after} training the network, i.e., keeping $W$ fixed, might be the problem. Indeed, as we lay out in the next paragraph, our experiments reveal that training a network with $\STL$ \textit{from scratch} and optimizing over the fake encoding directly, yields more expressive results.

\newcommand{\spectraCompCaption}{
Comparison of singular values between the trained network's (T2T-ViT-7, see \autoref{subsec_exp_class_1}) encoded weights and random matrices of the same sizes. The graph provides evidence that learning with $\STL$ occurs in a higher dimensional space.
}
\ifdefined\ICLR
\begin{minipage}{0.48\linewidth}
    \begin{figure}[H]
        \centering
        \includegraphics[width=1\linewidth]{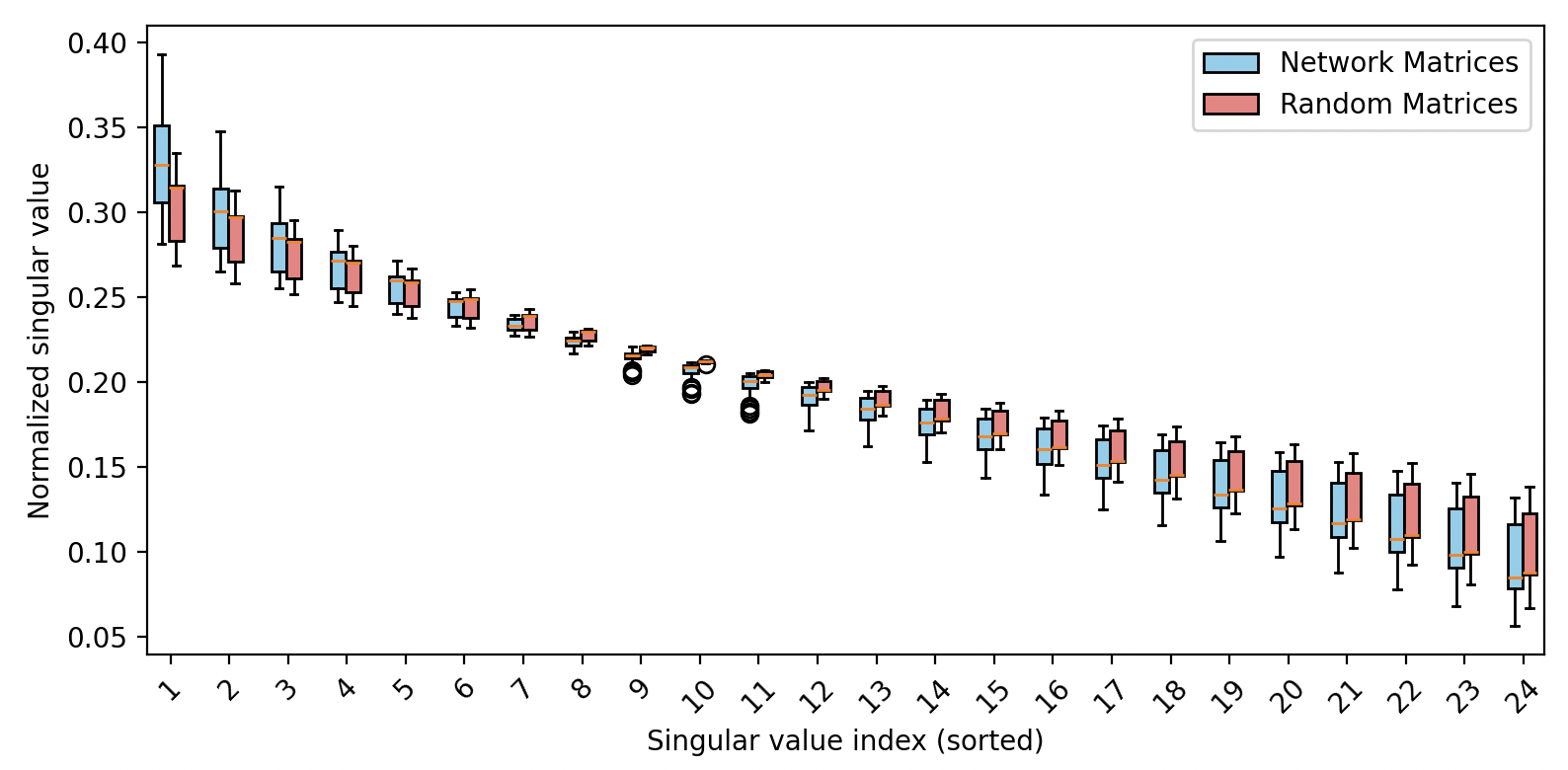}
        \caption{\spectraCompCaption}
        \label{fig:spectra-comparison}
    \end{figure}
\end{minipage}
\hfill
\begin{minipage}{0.48\linewidth}
    \begin{table}[H]
        \centering
        \classOneTable
        \caption{\classOneTableCaption}
        \label{tab:table_t2t_vit}
    \end{table}
\end{minipage}
\fi 

\ifdefined\AISorARX
\begin{figure}
    \centering
    \includegraphics[width=1\linewidth]{spectra_comparison_vit.png}
    \caption{\spectraCompCaption}
    \label{fig:spectra-comparison}
\end{figure}
\fi

In the class 1 experiment (\autoref{subsec_exp_class_1}), we train a vision transformer from scratch, using $\STL$ with tile size $t=4$ and $r=24$, \textit{directly training} over the fake encoding. Each 4x4 tile of a weights matrix $W$ corresponds to a fake encoding vector of size $24$. Stacking the vectors side by side we obtain a wide matrix $\cW$ with $24$ rows. Initialization of $\cW$ is by encoding a random Gaussian matrix $W$ using the matrix $\EW$ learned in the class 0 experiment (\autoref{subsec_exp_class_0}). The rank of $\cW$ before training is at most $16$, since the encoded blocks $\EW \cdot \v(W_{I,J})$ all belong to the same $16$-dimensional sub-space. However, after training, we compute the spectrum (singular values) of $\cW$ and observe it uses all $24$ possible directions. The results are described in \autoref{fig:spectra-comparison}. We can conclude that the training process indeed escapes the low dimensional space, showing the fake encoding are utilized.

The key takeaway is that trying to approximate a \textbf{trained} linear layer using \matmul, in the $L^2$ sense, is not the correct approach with $\STL$. Instead, training the network with $\STL$ from the ground up, directly on the fake encoding space, increases the number of parameters and possibly increases accuracy.

\section{Theoretical Foundations and Initialization}\label{sec_theory}

The SNF \eqref{eq_SNF} may seem unnatural at first glance, as it interprets a bilinear function $f$ as a change of basis into an $r$-dimensional space. However, for a specific family of bilinear operators, this has a very clean interpretation. Convolution operators of abelian (commutative) groups, can be written, by the \textit{convolution theorem} \citep{Cooley1965FFT}, as follows: Let $\bF$ denote the Fourier transform matrix for the underlying group, let $\hat{\EX},\hat{\EW},\hat{\bD}$ denote embedding matrices, attaching coefficients from $X,W$ to group elements and vice-versa. Then $f(X,W)=\bD^{\top}\bF^{-1}((\bF\EX\cdot \v(X))\odot (\bF \EW\cdot \v(W))$. In words, the operator maps $t\times t$ matrices to some $r$-dimensional vectors, on which we perform the group's convolution. Although convolution seems as a very abstract operation, it is closely related to a familiar concept: polynomial multiplication. In fact, convolution in \textit{abelian} groups is just (multi-variate) polynomial multiplication, modulo some monomial.

This view relates to the group-theoretic approach for FMM, which originated in the work of \citet{cu03}. However, their framework restricts the embedding matrices ($\EX,\EW,\bD$) significantly and does not deal with the task of approximation, while relying on divide-and-conquer to obtain asymptotic speedups. A recent work of \cite{PUW25} is, to the best of our knowledge, the first group-theoretic approach for Approximate \matmul, although \citet{AZ23} also make a step in this approach (formulated differently). The authors present a simple construction of embedding matrices that achieves SoTA tradeoffs between speed and accuracy. Moreover, the authors present a more sophisticated construction that completely beats SoTA tradeoffs against certain common distributions of matrices (like random $\{\pm1\}$ i.i.d entries).

The significance of \cite{PUW25} for our work, is that it lays a theoretical foundation for the capacity of $\STL$ to provide good approximation for \matmul s. Moreover, it provides convolution operators which are theoretically good \textbf{initialization} points for the optimization process. As this task is extremely non-convex, good initialization is crucial.

All in all, our work steps out of the approximate MM group-theoretic framework presented in \cite{PUW25}, by freely optimizing over the encoder / decoder matrices, to match the encountered (activation) matrices. To emphasize the last point, note that we are not trying to learn \textit{global} encoder / decoder matrices, but rather \textit{data-dependent} ones.

\section{Discussion}
We believe that that the approach we presented here, together with the preliminary evidence, motivates further research in many directions. First, whether and in what cases, can $\STL$ improve both \textit{accuracy} and \textit{inference throughput} of deep networks. Second, how to \textit{train} $\STL$, and in particular, finding clever \textit{initialization} points and suitable \textit{regularization} techniques. Third, if the random Strassen subset approach can be proved theoretically (against any matrix). Fourth, how well can $\STL$ perform with specialized CUDA kernels and dedicated engineering.

While we do not claim SoTA results here, we believe this line of work has the potential to reach or surpass competitive baselines with more study, specifically into the optimization problem $\STL$ poses.

\bibliography{refs}

\begin{thebibliography}{}

\bibitem[Abboud et~al., 2023]{AFKLM23}
Abboud, A., Fischer, N., Kelley, Z., Lovett, S., and Meka, R. (2023).
\newblock New graph decompositions and combinatorial boolean matrix multiplication algorithms.
\newblock {\em Electron. Colloquium Comput. Complex.}, {TR23-180}.

\bibitem[Ahmad et~al., 2024]{ALW24}
Ahmad, A., Du, L., and Zhang, W. (2024).
\newblock Fast and practical strassen's matrix multiplication using fpgas.
\newblock {\em arXiv preprint arXiv:2406.02088}.

\bibitem[Alman and Zhang, 2023]{AZ23}
Alman, J. and Zhang, H. (2023).
\newblock Generalizations of matrix multiplication can solve the light bulb problem.
\newblock In {\em 64th {IEEE} Annual Symposium on Foundations of Computer Science, {FOCS} 2023, Santa Cruz, CA, USA, November 6-9, 2023}, pages 1471--1495. {IEEE}.

\bibitem[Anandkumar et~al., 2014]{anandkumar14b}
Anandkumar, A., Ge, R., Hsu, D., Kakade, S.~M., and Telgarsky, M. (2014).
\newblock Tensor decompositions for learning latent variable models.
\newblock {\em Journal of Machine Learning Research}, 15:2773--2832.

\bibitem[Chen et~al., 2018]{chen2018evaluation}
Chen, S.~F., Beeferman, D., and Rosenfeld, R. (2018).
\newblock Evaluation metrics for language models.
\newblock {\em Carnegie Mellon University}.

\bibitem[Chen et~al., 2021]{chen2021chasing}
Chen, T., Cheng, Y., Gan, Z., Yuan, L., Zhang, L., and Wang, Z. (2021).
\newblock Chasing sparsity in vision transformers: An end-to-end exploration.
\newblock {\em arXiv preprint arXiv:2106.04533}.

\bibitem[Choromanski et~al., 2021]{performer21}
Choromanski, K., Likhosherstov, V., Dohan, D., Song, X., Gane, A., Sarlos, T., Hawkins, P., Davis, J., Mohiuddin, A., Kaiser, L., Belanger, D., Colwell, L., and Weller, A. (2021).
\newblock Rethinking attention with performers.
\newblock In {\em Proceedings of the International Conference on Learning Representations}.

\bibitem[Cohn and Umans, 2003]{cu03}
Cohn, H. and Umans, C. (2003).
\newblock A group-theoretic approach to fast matrix multiplication.
\newblock In {\em 44th Symposium on Foundations of Computer Science {(FOCS} 2003), 11-14 October 2003, Cambridge, MA, USA, Proceedings}, pages 438--449. {IEEE} Computer Society.

\bibitem[Cooley and Tukey, 1965]{Cooley1965FFT}
Cooley, J.~W. and Tukey, J.~W. (1965).
\newblock An algorithm for the machine calculation of complex fourier series.
\newblock {\em Mathematics of Computation}, 19(90):297--301.

\bibitem[Dadush et~al., 2018]{DGO18}
Dadush, D., {Guzm{\'a}n Paredes}, C., and Olver, N. (2018).
\newblock Fast, deterministic and sparse dimensionality reduction.
\newblock In {\em Proceedings of the 2018 Annual ACM-SIAM Symposium on Discrete Algorithms (SODA)}, pages 1330--1344. SIAM.
\newblock 29th Annual ACM-SIAM Symposium on Discrete Algorithms, SODA 2018, SODA 2018 ; Conference date: 07-01-2018 Through 10-01-2018.

\bibitem[Dao et~al., 2020a]{Kaleidescope20}
Dao, T., Sohoni, N.~S., Gu, A., Eichhorn, M., Blonder, A., Leszczynski, M., Rudra, A., and R{\'{e}}, C. (2020a).
\newblock Kaleidoscope: An efficient, learnable representation for all structured linear maps.
\newblock In {\em 8th International Conference on Learning Representations, {ICLR} 2020, Addis Ababa, Ethiopia, April 26-30, 2020}. OpenReview.net.

\bibitem[Dao et~al., 2020b]{Kaleidescope}
Dao, T., Sohoni, N.~S., Gu, A., Eichhorn, M., Blonder, A., Leszczynski, M., Rudra, A., and R{\'{e}}, C. (2020b).
\newblock Kaleidoscope: An efficient, learnable representation for all structured linear maps.
\newblock In {\em 8th International Conference on Learning Representations, {ICLR} 2020, Addis Ababa, Ethiopia, April 26-30, 2020}. OpenReview.net.

\bibitem[Deng et~al., 2009]{ImageNet1K}
Deng, J., Dong, W., Socher, R., Li, L.-J., Li, K., and Fei-Fei, L. (2009).
\newblock {ImageNet}: A large-scale hierarchical image database.
\newblock In {\em Proceedings of the IEEE Conference on Computer Vision and Pattern Recognition (CVPR)}, pages 248--255.

\bibitem[Desai and Shrivastava, 2024]{DS24}
Desai, A. and Shrivastava, A. (2024).
\newblock In defense of parameter sharing for model-compression.
\newblock In {\em The Twelfth International Conference on Learning Representations}.

\bibitem[Dettmers and et~al., 2024]{dettmers2024quantized}
Dettmers, T. and et~al. (2024).
\newblock Quantized models for large language models.
\newblock {\em arXiv preprint arXiv:2402.01453}.

\bibitem[Fern{\'a}ndez-Marqu{\'e}s et~al., 2023]{fernandez2023are}
Fern{\'a}ndez-Marqu{\'e}s, J., AbouElhamayed, A.~F., Lane, N.~D., and Abdelfattah, M.~S. (2023).
\newblock Are we there yet? product quantization and its hardware acceleration.
\newblock {\em arXiv preprint arXiv:2305.18334}.

\bibitem[Frantar and Alistarh, 2023a]{SparseGPT23}
Frantar, E. and Alistarh, D. (2023a).
\newblock Sparsegpt: Massive language models can be accurately pruned in one-shot.
\newblock In {\em Proceedings of the International Conference on Machine Learning (ICML)}.

\bibitem[Frantar and Alistarh, 2023b]{frantar2023efficient}
Frantar, N. and Alistarh, D. (2023b).
\newblock Efficient quantization for large language models.
\newblock In {\em Proceedings of the International Conference on Learning Representations}.

\bibitem[Frantar et~al., 2022]{frantar2022tensor}
Frantar, N., Alistarh, D., and et~al. (2022).
\newblock Tensor quantization for llms.
\newblock In {\em Proceedings of the 36th International Conference on Machine Learning}.

\bibitem[Fu et~al., 2023]{monarch_mixer}
Fu, D.~Y., Arora, S., Grogan, J., Johnson, I., Eyuboglu, S., Thomas, A.~W., Spector, B., Poli, M., Rudra, A., and R{\'{e}}, C. (2023).
\newblock Monarch mixer: {A} simple sub-quadratic gemm-based architecture.
\newblock {\em CoRR}, abs/2310.12109.

\bibitem[Goto and van~de Geijn, 2008]{strassen_reloaded}
Goto, K. and van~de Geijn, R. (2008).
\newblock Anatomy of high-performance matrix multiplication.
\newblock In {\em ACM Transactions on Mathematical Software (TOMS)}, volume~34, pages 1--25.

\bibitem[Han et~al., 2015]{DeepCompression15}
Han, S., Mao, H., and Dally, W.~J. (2015).
\newblock Deep compression: Compressing deep neural network with pruning, trained quantization and huffman coding.
\newblock {\em arXiv: Computer Vision and Pattern Recognition}.

\bibitem[Han et~al., 2016]{han2016deep}
Han, S., Mao, H., and Dally, W.~J. (2016).
\newblock Deep compression: Compressing deep neural networks with pruning, trained quantization and huffman coding.
\newblock In {\em International Conference on Learning Representations (ICLR)}.

\bibitem[He et~al., 2017]{he2017channel}
He, Y., Zhang, X., Zhang, S., and Sun, J. (2017).
\newblock Channel pruning for accelerating very deep neural networks.
\newblock In {\em Proceedings of the 2017 IEEE International Conference on Computer Vision (ICCV)}, pages 1389--1397. IEEE.

\bibitem[Hoefler et~al., 2021]{HAB+21}
Hoefler, T., Alistarh, D., Ben-Nun, T., Dryden, N., and Peste, A. (2021).
\newblock Sparsity in deep learning: pruning and growth for efficient inference and training in neural networks.
\newblock {\em J. Mach. Learn. Res.}, 22(1).

\bibitem[Hooker, 2021]{Hooker_HW_Lottey21}
Hooker, S. (2021).
\newblock The hardware lottery.
\newblock {\em Commun. ACM}, 64(12):58–65.

\bibitem[Hu et~al., 2022]{llora22}
Hu, E.~J., Peng, N., Goh, G., Fang, A., and et~al. (2022).
\newblock Lora: Low-rank adaptation of large language models.
\newblock In {\em International Conference on Learning Representations}.

\bibitem[Hu et~al., 2024]{Hu_2_4_24}
Hu, Y., Zhao, K., Huang, W., Chen, J., and Zhu, J. (2024).
\newblock Accelerating transformer pre-training with 2:4 sparsity.
\newblock In Salakhutdinov, R., Kolter, Z., Heller, K., Weller, A., Oliver, N., Scarlett, J., and Berkenkamp, F., editors, {\em Proceedings of the 41st International Conference on Machine Learning}, volume 235 of {\em Proceedings of Machine Learning Research}, pages 19531--19543. PMLR.

\bibitem[Huang et~al., 2022]{huang22_SVD_fails}
Huang, G., Li, W., and Zhang, Z. (2022).
\newblock Learning low-rank deep neural networks via singular vector orthogonality regularization and singular value sparsification.
\newblock In {\em Proceedings of the IEEE/CVF Conference on Computer Vision and Pattern Recognition}, pages 6570--6579.

\bibitem[Indyk et~al., 2019]{IndykLR19}
Indyk, P., Vakilian, A., and Yuan, Y. (2019).
\newblock {\em Learning-based low-rank approximations}.
\newblock Curran Associates Inc., Red Hook, NY, USA.

\bibitem[Jagtap et~al., 2022]{JSKK_kronecker22}
Jagtap, A.~D., Shin, Y., Kawaguchi, K., and Karniadakis, G.~E. (2022).
\newblock Deep kronecker neural networks: A general framework for neural networks with adaptive activation functions.
\newblock {\em Neurocomputing}, 468:165--180.

\bibitem[Kim et~al., 2023]{Kim2023FullSO}
Kim, S., Hooper, C., Wattanawong, T., Kang, M., Yan, R., Genç, H., Dinh, G., Huang, Q., Keutzer, K., Mahoney, M.~W., Shao, Y.~S., and Gholami, A. (2023).
\newblock Full stack optimization of transformer inference: a survey.
\newblock {\em ArXiv}, abs/2302.14017.

\bibitem[Kovachki et~al., 2024]{FNO24}
Kovachki, N., Li, Z., Liu, B., Azizzadenesheli, K., Bhattacharya, K., Stuart, A., and Anandkumar, A. (2024).
\newblock Neural operator: learning maps between function spaces with applications to pdes.
\newblock {\em J. Mach. Learn. Res.}, 24(1).

\bibitem[Krizhevsky et~al., 2012]{AlexNet12}
Krizhevsky, A., Sutskever, I., and Hinton, G.~E. (2012).
\newblock Imagenet classification with deep convolutional neural networks.
\newblock In {\em Proceedings of the 25th International Conference on Neural Information Processing Systems (NIPS'12)}, volume~1, pages 1097--1105. Curran Associates Inc.

\bibitem[Kurtic et~al., 2022]{oBert22}
Kurtic, E., Campos, D., Nguyen, T., Frantar, E., Kurtz, M., Fineran, B., Goin, M., and Alistarh, D. (2022).
\newblock The optimal {BERT} surgeon: Scalable and accurate second-order pruning for large language models.
\newblock In {\em Proceedings of the 2022 Conference on Empirical Methods in Natural Language Processing}, pages 4163--4181, Abu Dhabi, United Arab Emirates. Association for Computational Linguistics.

\bibitem[Lee-Thorp et~al., 2022]{FNET21}
Lee-Thorp, J., Ainslie, J., Eckstein, I., and Ontanon, S. (2022).
\newblock {FN}et: Mixing tokens with {F}ourier transforms.
\newblock In Carpuat, M., de~Marneffe, M.-C., and Meza~Ruiz, I.~V., editors, {\em Proceedings of the 2022 Conference of the North American Chapter of the Association for Computational Linguistics: Human Language Technologies}, pages 4296--4313, Seattle, United States. Association for Computational Linguistics.

\bibitem[Li et~al., 2025]{Li24LLMs}
Li, J., Xu, J., Huang, S., Chen, Y., Li, W., Liu, J., Lian, Y., Pan, J., Ding, L., Zhou, H., Wang, Y., and Dai, G. (2025).
\newblock Large language model inference acceleration: A comprehensive hardware perspective.

\bibitem[Li et~al., 2022]{li2022arbitrary}
Li, Y., Yang, X., Zhang, J., and et~al. (2022).
\newblock Efficient arbitrary precision acceleration for large language models on gpu tensor cores.
\newblock {\em arXiv preprint arXiv:2202.05654}.

\bibitem[Li et~al., 2023]{compression_li23}
Li, Z., Li, H., and Meng, L. (2023).
\newblock Model compression for deep neural networks: A survey.
\newblock {\em Computers}, 12(3):60.

\bibitem[Matsuoka and Kang, 2022]{MK22}
Matsuoka, S. and Kang, D. (2022).
\newblock Efficient matrix multiplication for dnns using fpga and strassen’s algorithm.
\newblock In {\em Proceedings of FPGA'22}, pages 30--40.

\bibitem[Moar et~al., 2024]{Moar24}
Moar, C., Pellauer, M., and Kwon, H. (2024).
\newblock Characterizing the accuracy - efficiency trade-off of low-rank decomposition in language models.
\newblock {\em ArXiv}, abs/2405.06626.

\bibitem[Naveed et~al., 2024]{overview_of_llms}
Naveed, H., Khan, A.~U., Qiu, S., Saqib, M., Anwar, S., Usman, M., Akhtar, N., Barnes, N., and Mian, A. (2024).
\newblock A comprehensive overview of large language models.

\bibitem[{NVIDIA}, 2020]{nvidia2020a100}
{NVIDIA} (2020).
\newblock Nvidia a100 tensor core gpu: Performance and innovation.
\newblock {\em IEEE Xplore}.

\bibitem[NVIDIA, 2023]{nvidia_tensor_cores}
NVIDIA (2023).
\newblock Nvidia tensor cores: Versatility for hpc \& ai.

\bibitem[Panagakis et~al., 2021]{Nvidia_tensor_algebra21}
Panagakis, Y., Kossaifi, J., Chrysos, G.~G., Oldfield, J., Nicolaou, M.~A., Anandkumar, A., and Zafeiriou, S. (2021).
\newblock Tensor methods in computer vision and deep learning.
\newblock {\em Proceedings of the IEEE}, 109(5):863--890.

\bibitem[Pratt et~al., 2025]{PUW25}
Pratt, K., Uffenheimer, Y., and Weinstein, O. (2025).
\newblock Approximate matrix multiplication via convolutions.

\bibitem[Stock et~al., 2020]{FB_PQ2020}
Stock, P., Joulin, A., Gribonval, R., Graham, B., and J{\'e}gou, H. (2020).
\newblock And the bit goes down: Revisiting the quantization of neural networks.
\newblock {\em ArXiv}, abs/1907.05686.

\bibitem[Strassen, 1986]{s86}
Strassen, V. (1986).
\newblock The asymptotic spectrum of tensors and the exponent of matrix multiplication.
\newblock In {\em 27th Annual Symposium on Foundations of Computer Science (FOCS)}, pages 49--54. IEEE.

\bibitem[Sun et~al., 2023a]{Wanda23}
Sun, M., Liu, Z., Bair, A., and Kolter, J.~Z. (2023a).
\newblock A simple and effective pruning approach for large language models.
\newblock In {\em Advances in Neural Information Processing Systems}.

\bibitem[Sun et~al., 2023b]{sun2023optimizing}
Sun, R., Zhang, W., and et~al. (2023b).
\newblock Optimizing llm inference with efficient quantization strategies.
\newblock {\em arXiv preprint arXiv:2304.05212}.

\bibitem[Tolstikhin et~al., 2024]{MLPMixer}
Tolstikhin, I., Houlsby, N., Kolesnikov, A., Beyer, L., Zhai, X., Unterthiner, T., Yung, J., Steiner, A., Keysers, D., Uszkoreit, J., Lucic, M., and Dosovitskiy, A. (2024).
\newblock Mlp-mixer: an all-mlp architecture for vision.
\newblock In {\em Proceedings of the 35th International Conference on Neural Information Processing Systems}, NIPS '21, Red Hook, NY, USA. Curran Associates Inc.

\bibitem[Tschannen et~al., 2018]{StrassenNets18}
Tschannen, M., Khanna, A., and Anandkumar, A. (2018).
\newblock {S}trassen{N}ets: Deep learning with a multiplication budget.
\newblock In Dy, J. and Krause, A., editors, {\em Proceedings of the 35th International Conference on Machine Learning}, volume~80 of {\em Proceedings of Machine Learning Research}, pages 4985--4994. PMLR.

\bibitem[Wen et~al., 2016]{Wen_structured_sparsity16}
Wen, W., Wu, C., Wang, Y., Chen, Y., and Li, H.~H. (2016).
\newblock Learning structured sparsity in deep neural networks.
\newblock {\em ArXiv}, abs/1608.03665.

\bibitem[Wenger et~al., 2023]{limits_ntk}
Wenger, J., Dangel, F., and Kristiadi, A. (2023).
\newblock On the disconnect between theory and practice of neural networks: Limits of the ntk perspective.
\newblock {\em arXiv preprint arXiv:2310.00137}.

\bibitem[Xiao and et~al., 2023]{xiao2023robust}
Xiao, Y. and et~al. (2023).
\newblock Robust quantization for large language models.
\newblock In {\em Proceedings of the International Conference on Learning Representations}.

\bibitem[Yuan et~al., 2021a]{yitu2021t2tvit}
Yuan, L., Chen, Y., Wang, T., Yu, W., Shi, Y., Jiang, Z., Tay, F.~E., Feng, J., and Yan, S. (2021a).
\newblock {T2T-ViT}: Tokens-to-token vision transformer.
\newblock \url{https://github.com/yitu-opensource/T2T-ViT}.

\bibitem[Yuan et~al., 2021b]{Yuan_2021_ICCV}
Yuan, L., Chen, Y., Wang, T., Yu, W., Shi, Y., Jiang, Z.-H., Tay, F.~E., Feng, J., and Yan, S. (2021b).
\newblock Tokens-to-token vit: Training vision transformers from scratch on imagenet.
\newblock In {\em Proceedings of the IEEE/CVF International Conference on Computer Vision (ICCV)}, pages 558--567.

\bibitem[Zhang et~al., 2017]{zhang2017lowrank}
Zhang, Z., Zhao, Y., and Yu, Q. (2017).
\newblock Low-rank matrix factorization for deep learning.
\newblock {\em IEEE Transactions on Neural Networks and Learning Systems}, 28(5):1071--1082.

\bibitem[Zhu et~al., 2024]{zhu_Google_MM_free2024}
Zhu, R.-J., Zhang, Y., Sifferman, E., Sheaves, T., Wang, Y., Richmond, D., Zhou, P., and Eshraghian, J.~K. (2024).
\newblock Scalable matmul-free language modeling.

\end{thebibliography}

\ifdefined\AIS
\section*{Checklist}
\begin{enumerate}
    \item For all models and algorithms presented, check if you include:
        \begin{enumerate}
            \item A clear description of the mathematical setting, assumptions, algorithm, and/or model. \textbf{Yes}
            \item An analysis of the properties and complexity (time, space, sample size) of any algorithm. \textbf{Yes}
        \end{enumerate}
    \item For any theoretical claim, check if you include:
        \begin{enumerate}
            \item Statements of the full set of assumptions of all theoretical results. \textbf{Yes}
            \item Complete proofs of all theoretical results. \textbf{Yes}
            \item Clear explanations of any assumptions. \textbf{Yes}
        \end{enumerate}
    \item For all figures and tables that present empirical results, check if you include:
        \begin{enumerate}
            \item The code, data, and instructions needed to reproduce the main experimental results (either in the supplemental material or as a URL). \textbf{Yes} (\textit{We have specified the algorithm and experiments that we did. Everything can be reproduced following \citep{Yuan_2021_ICCV} and implementing $\STL$ based on our Pseudo-code.})
            \item All the training details. \textbf{Not Applicable} (\textit{Followed \citet{Yuan_2021_ICCV}})
            \item A clear definition of the specific measure or statistics and error bars. \textbf{Not Applicable}
            \item A description of the computing infrastructure used. \textbf{Yes}
        \end{enumerate}
    \item If you are using existing assets (e.g., code, data, models) or curating/releasing new assets, check if you include:
        \begin{enumerate}
            \item Citations of the creator If your work uses existing assets. \textbf{Yes}
            \item The license information of the assets, if applicable. \textbf{Not Applicable}
            \item New assets either in the supplemental material or as a URL, if applicable. \textbf{Not Applicable}
            \item Information about consent from providers/curators. \textbf{Not Applicable}
            \item Discussion of sensible content if applicable, e.g., personally identifiable information or offensive content. \textbf{Not Applicable}
        \end{enumerate}
    \item If you used crowdsourcing or conducted research with human subjects, check if you include: 
        \begin{enumerate}
            \item The full text of instructions given to participants and screenshots. \textbf{Not Applicable}
            \item Descriptions of potential participant risks, with links to Institutional Review Board (IRB) approvals if applicable. \textbf{Not Applicable}
            \item The estimated hourly wage paid to participants and the total amount spent on participant compensation. \textbf{Not Applicable}
        \end{enumerate}
\end{enumerate}
\fi


\ifdefined\ICLR
\newpage
\appendix
\section{GPU Complexity of $\STL$}\label{app:gpu-stl}
\begin{proof}[Proof of \autoref{eq_STL_MatMul}]
    \proofOfEqSTLMatMul
\end{proof}

\begin{figure}
        \centering
        \includegraphics[width=0.8\textwidth]{speedup.png}
        \caption{Observed speedup factor for $\STL$ for tile size $t=4$, $r=16,20,\ldots,48,49$, and matrices of size $n \times n$ with $n=4096, 8192, 16384$, using a (non-optimized) \texttt{PyTorch} implementation. As expected, the throughput speedup is almost linear in the tensor rank. (Note: $r=49$ can imitate exact matrix multiplication by Strassen, and is given here for completeness.)}
        \label{performance_plot}
    \end{figure}
    
\section{More Details On the Parameter Increase of $\STL$}\label{app_STL_parameter_increase}
In the downstream AI applications of matrix multiplication, we are free to optimize directly in the space of the encoded weight matrix $ \eW$, containing $r$ parameters per tile, instead of $t^2$.  This can improve the expressivity of $\STL$ as a module inside a network. It turns out that this indeed can be done, as we show in the following sections in the context of training $\STL$ inside an actual deep network.  However, we first show a negative result. \autoref{L2disappointment} below states that, as long as we  measure the accuracy of $\STL$ using Frobenius norm of the residual (error matrix) with respect to standard matrix multiplication, we effectively do not gain more than $t^2$ trainable weights per tile of $\eW$, which is the same as the number of parameters of the corresponding tile (the original tile of $W$). The lemma however does not rule out increased expressivity when training using other loss functions, as our experiments in what follows support.

To explain our result, consider the simplified setting of approximating matrix multiplication of two single tile matrices, $X, W\in \R^{t\times t}$.  The matrices $X$ can come from any fixed distribution $\cD_{X}$.  The matrices $W$
are drawn uniformly from a finite population of size $N$, which we denote $\repW$, and the two matrices are drawn independently of each other.  To connect this to actual applications, one should think of $\repW$ as a collection of tiles from a pretrained weight matrix of some linear layer which we want to replace with the $\STL$ operator, which is the $\STL$-equivalent of matrix pruning. The mathematical reason we restrict $\repW$ to be finite is that we want to allow the encoding parameters of $W\in\repW$ to be any function, without requiring any structure such as linearity or even smoothness.  In other words, the encoding parameters will simply be memorized. The training will optimize over the encoder $\EX$, the decoder $\bD$ and over these fake encoding parameters.  Our notation:

 \begin{equation}\label{fakeencrep}
\FakeEnc(\repW) =  \{\FakeEnc(W) \in \R^r\ |\  W\in \repW\}\ .
 \end{equation}
There is now no need for the $W$-encoder $\EW$.
  The collection of all values $\FakeEnc(W)$ for $W\in\repW$, which be formally denote
by $\FakeEnc(\repW)$, can be thought of, for computational convenience, as a matrix of shape
$N\times r$.
For a fixed repertory $\repW$, the optimization now becomes

\begin{equation}\label{optimization3}
 \alpha_{\STL}^\repW = \inf_{\substack{\EX, \bD \\ \FakeEnc}}
  \E_{{X,W}}[\err(X, W,\bD,\EX,\FakeEnc(W))] 
\end{equation}
where the expectation is over $X\sim\cD_X$ and $W$ uniform from $\repW$, and the error function $\err(X, W,\bD,\EX,\FakeEnc(W))$ is the mean average error of the residual:
\begin{equation}
 \frac 1 {t^2}\| 
{\vect(XW) - (\bD^T(\EX \vect(X)\odot {\FakeEnc(W)))}\|}_2^2
\end{equation}

The fake encoding variables seem to promise an increase in capacity of the learning space we are trying to optimize over, compared to learning over $\EX,\EW,\bD$.  Unfortunately, as the following lemma reveals, this is not the case, and the reason for this is the choice of the Frobenius norm (squared) loss function.  We state and prove this disappointing fact as a lemma, but prepare the disappointed reader that in what follows, the fake encoding parameters will show some promise in downstream AI applications, where the loss functions are different.

\begin{lemma}\label{L2disappointment}
    For any fixed  $\EX, \bD$, the optimal value of $\FakeEnc^*(\repW)$ minimizing the RHS of \eqref{optimization3}
    is given by the relationship 
    $\FakeEnc^* (W) = \bF\vect(W)$ for all $W\in \repW$, for some $\bF\in\R^{r\times t^2}$ (which we may as well call the effective encoding matrix for $W$.)
\end{lemma}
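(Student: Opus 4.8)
The plan is to recognize the optimization over $\FakeEnc$ as a separable least-squares problem and exploit its linear structure. Fix $\EX$ and $\bD$. For a fixed $W \in \repW$, write $\be_W := \FakeEnc(W) \in \R^r$; since $\repW$ is finite and the encoding parameters are unconstrained, the objective \eqref{optimization3} decomposes as an average over $W \in \repW$ of independent terms, and each term is minimized separately over its own $\be_W$. So it suffices to fix a single $W$ and minimize, over $\be \in \R^r$, the quantity $\E_{X \sim \cD_X}\left\| \vect(XW) - \bD^\top\big( (\EX \vect(X)) \odot \be \big) \right\|_2^2$.

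The key observation is that this inner objective is a \emph{quadratic} function of $\be$: the map $\be \mapsto \bD^\top\big( (\EX\vect(X)) \odot \be \big)$ is linear in $\be$, since Hadamard-multiplying by the fixed vector $\EX\vect(X)$ then applying $\bD^\top$ is a composition of linear maps. Concretely, writing $\bD^\top \diag(\EX\vect(X)) =: M_X \in \R^{t^2 \times r}$, the objective becomes $\E_X \| \vect(XW) - M_X \be \|_2^2$, a standard ridge-free least-squares problem whose unique minimizer (assuming the relevant Gram matrix is invertible; otherwise take the minimum-norm solution) is $\be^*(W) = \big(\E_X[M_X^\top M_X]\big)^{-1} \E_X[M_X^\top \vect(XW)]$. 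The main step is then to check that $\be^*(W)$ is a \emph{linear} function of $\vect(W)$. For this, note $\vect(XW) = (I \otimes X)\vect(W)$ — or more conveniently $\vect(XW) = (W^\top \otimes I)\vect(X)$, but the first form is the one we want — so $\E_X[M_X^\top \vect(XW)] = \big(\E_X[M_X^\top (I \otimes X)]\big)\vect(W)$, which is a fixed matrix (independent of $W$) applied to $\vect(W)$. Hence $\be^*(W) = \bF \vect(W)$ with $\bF := \big(\E_X[M_X^\top M_X]\big)^{-1}\,\E_X[M_X^\top (I \otimes X)] \in \R^{r \times t^2}$, a single matrix serving all $W \in \repW$ simultaneously. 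This is exactly the claimed relationship.

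I expect the main (minor) obstacle to be bookkeeping with the vectorization and Kronecker identities — making sure the index $L$ / tile structure collapses correctly to the single-tile case stated in the appendix, and handling the degenerate case where $\E_X[M_X^\top M_X]$ is singular (there the minimum-norm least-squares solution still depends linearly on $\vect(W)$ via the Moore–Penrose pseudoinverse, so the conclusion $\FakeEnc^*(W) = \bF\vect(W)$ is unaffected). There is no genuine difficulty: the entire content is that least squares is linear in the target, and the target $\vect(XW)$ is linear in $\vect(W)$, so the optimal fake encoding factors through the $t^2$-dimensional space $\{\vect(W)\}$ — meaning the $r - t^2$ "extra" coordinates buy nothing under the Frobenius loss.
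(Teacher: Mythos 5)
Your proposal is correct and follows essentially the same route as the paper's proof: decompose the objective over the finite repertory $\repW$, observe that for fixed $\EX,\bD$ the map $\FakeEnc(W)\mapsto \bD^\top\bigl((\EX\vect(X))\odot\FakeEnc(W)\bigr)$ is linear and the target $\vect(XW)$ is linear in $\vect(W)$, and read off from the least-squares normal equations that the minimizer is a $W$-independent matrix applied to $\vect(W)$ (the paper writes this coordinate-wise via the vectors $\bZ_{X,i},\bZ'_{X,i}$ rather than with $M_X=\bD^\top\diag(\EX\vect(X))$, but the content is identical). Your handling of a singular Gram matrix via the pseudoinverse is a minor refinement the paper glosses over; otherwise there is nothing to add.
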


\begin{proof}
The first thing to note about \eqref{optimization3} is that the optimization problem can be done independently for each $W\in \repW$. Hence let us fix one $W\in \repW$ and assume that $\EX, \bD$ are such that the minimizer for (\ref{optimization3}) is achieved. Now define the corresponding minimization problem specific for $W$:
\begin{equation}\label{optimization3_specific}
 \alpha_{\STL}^\repW(W) = \inf_{\substack{\EX, \bD \\ \FakeEnc(W)}}
 \expectation_{X}[\err(X, W,\bD,\EX,\FakeEnc(W))] 
\end{equation}
Then clearly $\alpha_{\STL}^\repW = \frac 1 N \sum_{W\in\repW} \alpha_{\STL}^\repW(W)$. Now let us replace the vector norm in $\err$ by its definition, summing squares over all coordinate differences, so $\err$ becomes:
\begin{equation}\label{err_def_with indices}
\frac 1 {t^2}
 \sum_{i=1}^{t^2} 
(\vect(XW)_i - \bD^T(\EX \vect(X)\odot (\FakeEnc(W))_i))^2
\end{equation}
The expression $\vect(XW)_i$ is clearly a linear function of $\vect W$, with coefficient vector we denote by $\bZ_{X, i}\in \R^{t^2}$ that depends on $X$ and $i$ only.  Similarly, the expression $\bD^T(\EX \vect(X)\odot (\FakeEnc(W))_i)$ is a linear function of $\FakeEnc(W)\in \R^r$, with a coefficient vector $\bZ'_{X, i}$ that depends on $X, i$ only. (Recall that we assume fixed and optimal encoder $\EX$ and decoder $\bD$ in the premise of the lemma, so we omit them in the notation for $\bZ, \bZ'$). This allows us to write $\err$ as 
\begin{equation}\label{optimization2C}
    \expectation_i(\bZ_{W, i}^T \vect(W) - \bZ'^T_{W, i} \FakeEnc(W))^2
\end{equation}
where the index $i$ is uniformly taken in $[t^2]$. The optimization now becomes that of minimizing:
\begin{equation}\label{rewritting_alpha}
    \expectation_{X,i}(\bZ_{W, i}^T \vect(W) - \bZ'^T_{W, i} \FakeEnc(W))^2\ ,
\end{equation}
over the $r$ variables $\FakeEnc(W)$. Now it is clear that the last minimization is a linear regression with $r$ variables over a distribution of equations. The optimizer $\FakeEnc^*(W)$ is given by
\begin{equation}\label{closed_form_solution}
\FakeEnc^*(W) = 
    \underbrace{
        \E_{X, i}\left [ \bZ'_{X, i} \bZ'^T_{X, i}  \right ] ^{-1} \E_{X, i} \left [ \bZ'_{X, i} \bZ^T_{X, i} \right ]
    }_{\mbox{Solution Matrix}} \  \vect(W).
\end{equation}

The Solution Matrix of shape $r\times (t^2)$, independent of $W$, mapping the original matrix $\vect (W)$ to its optimal fake encoding, is effectively the desired encoding matrix $\bF$ from the Lemma statement.
\end{proof}

The underwhelming implication of \autoref{L2disappointment} is that, when measuring the approximation error of $\STL$ vs. \matmul\;in the \emph{$L^2$ norm}, one cannot gain expressivity from the use of the extra learnable parameters hidden in the fake encoding of the $W$ matrices, compared to the expressivity we get from using a linear encoding function $\EW$ to encode $W$. Notice also that the proof did not use the fact that we were working over single tiny $t\times t$ tiles.  It just uses the fact that, viewed as a function on activation matrices $X$, the $\STL$ operator for fixed $(W, \EX, \EW, \bD)$, is a linear operator. The conclusion from \autoref{L2disappointment} would hold true for matrices of any shape, and lead to the conclusion: Directly optimizing fake encoding parameters for the tiles of a weight matrix $W$ does not effectively buy us more parameters than those already present in the original matrix $W$, as long as we care about Frobenius norm of the \matmul\;error.

Interestingly, when training $\STL$ for LLM downstream tasks, the actual loss function we are working with is the \emph{perplexity of language prediction} \citep{chen2018evaluation}, which is quite different than the (layer-wise) $L^2$ norm \citep{limits_ntk}. Indeed, our experiments involving training LLMs from scratch using $\STL$ show the effect of training $\STL$ layers in the (fake) encoding space, reassuring that it \emph{does exploit the parameter increase} of the operator. 

\section{Experiments}\label{app_STL_exp}

\subsection{Class 0 Experiment: Comparing $\STL$ to \twofour\, on Random Synthetic Data}\label{app_synthetic_experiment}
In our first experiment, we  compared the accuracy of $\STL$ with tile size $t=4$  with various parameters on matrices of size $4\times 4$ (corresponding to a single tile), with different values of $r$, to that of structured \twofour\;pruning. The main technical difficulty of this experiment was training the encoder and decoder matrices $\EX, \EW,\bD$.  As we shall see below,  a gradient descent learning strategy is highly dependent on the initialization of the solution.

\def\mymask{{\cal{M}}}
\def\argtop2{\operatorname{ArgTop2}}
We will concentrate on tile size $t=$ and $4\times 4$ matrices. To define the loss for the \twofour\;benchmark, we define a mask operator $\mymask$ which identifies the 2 highest (in magnitude) coordinates of each column of $W$, more precisely,
\begin{equation}
    \mymask(W)_{ij} = \begin{cases}
        \begin{matrix}
        1 & i \in \argtop2 \{|W_{kj}|\}_{k=1..4} \\
        0 & \mbox{otherwise}
        \end{matrix}
    \end{cases}
\end{equation}
where $\argtop2$ returns the two indices of the largest (in absolute value) two elements in a list of elements, breaking ties (say) by preferring lower indices.

The quality of this approach is denoted $\alpha_{\twofour}$ and is defined as follows:
\begin{equation*}
   \alpha_{\twofour} := \frac 1{16} \E_{W}\min_{\widetilde W\in \R^{4\times 4}}\E_X\|XW - X(\widetilde W\odot\mymask(W))\|_F^2.
\end{equation*}

The $1/16$ factor gives the average (since we are working with $4\times 4$ tiles). Moreover, we minimize over $\widetilde{W}$, to allow more advanced \twofour-sparsification techniques, which take the training data into account. Note that if $\cD_X$ was just the uniform distribution over all matrices (with bounded norm), then the solution would have always been $\widetilde{W}=W$.

For a fixed $W$ matrix, the minimizer for $\widetilde W$ in the last equation can be easily approximated by solving a convex program (in fact, a linear regression problem) over a random large (but fixed) population of $X$'s.  Our experiments have resulted in the following estimate:
$$\alpha_{\twofour}\approx 0.53.$$
Our goal is to obtain a competitive error for approximation of $XW$ using $\STL$.  It should be noted that our approximation is dependent on the distributions $\cD_X,\repW$. For the sake of our experiment, we set $\cD_X$ to be matrices whose entries are i.i.d. from $\cN(0,1)$ (normal Gaussian distribution, mean $0$, variance $1$). In general, if one wishes to approximate a linear layer in a trained network with $\STL$, it could make sense to take the distribution of $W$ to correspond to the empirical distribution of tiles of the pretrained weight matrix, and that of $X$ to come from the actual data of interest flowing through the network. 

We similarly define the quality of the $\STL$ approximation to be
\begin{equation}\label{optimization2}
 \alpha_{\STL} = \min_{\EX, ,\EW, \bD}  [\E_{X,W}\err(X, W, \bD,\EX, \EW)],
\end{equation}
where $\err$ is defined as before, only replacing $\FakeEnc(W)$ with the $W$-encoder (recall that we gain nothing by using fake encoding parameters, by \autoref{L2disappointment}, at least in the $L^2$ sense);


\paragraph{Estimates of $\alpha_{\STL}$.}
We have estimated $\alpha_{\STL}$  w.r.t. the Gaussian distribution on $X$ and a fixed random, Gaussian distributed population of $W$'s by running gradient descent on the encoders and decoders in an attempt to solve the minimization problem defining $\alpha_{\STL}$. The results were summarized in Figure 1 in the main part of the paper. It turns that out estimates heavily depend on the initialization of the gradient descent algorithm (see below for more details).

It appears from the plot that for approximately $r=42$, $\alpha_{\STL}$ roughly matches $\alpha_{\twofour}$. From Figure 2 in the main part of the paper, it is evident that at $r=42$ there is little chance to beat $\twofour$ in performance. However, the following should be noted: Our estimation of $\alpha_{\twofour}$ is very accurate, because it is calculated by averaging out over a random population of weight matrices $W$, an estimation of the $\twofour$ pruning error, which is a convex problem.\footnote{This is after having chosen the pruned coordinates using the magnitude heuristic. We are aware that there are more advanced methods for pruning, but (a) it is not clear whether those methods really make a difference for $4\times 4$ matrices and (b) there are possibly more advanced ways to optimize for $\alpha_{\STL}$.} Therefore, our comparisons are $\STL$-optimistic in the sense that it is likely that the true optimal bounds for $\alpha_{\STL}$ are better, possibly using better initialization and/or optimization techniques. This is in fact one of the main open questions in this paper.

\paragraph{Initialization Issues for Class 0 Experiment.}\label{appendix_class_0_more_info}
To estimate $\alpha_{\STL}$, we solved a non-convex optimization problem over the encoders and decoders, using gradient descent. Initializing the encoder and decoder parameters randomly gave us suboptimal estimates, compared to the following method, which is based on a pruned version of Strassen's encoders and decoders used for getting a tensor of rank $49$ for multiplying a pair of $4\times 4$ matrices.

If $\EX^{49}, \EW^{49}, \bD^{49} \in \R^{49\times 16}$ denote the encoders and decoders for Strassen's construction, then our construction for initializing the optimization for $\alpha_{\STL}$ was done by simple random \emph{pruning} in the encoding-space dimension. More precisely, we chose a random subset $I$ of $r$ integers in $[49]$ (without repetitions), and initialized $\EX, \EW, \bD\in \R^{r\times 16}$ to be the matrices obtained by extracting the $r$ rows indexed by $I$ from $\EX^{49}, \EW^{49}, \bD^{49}$, respectively. This rather naive initialization heuristic already gave significantly better results than random initialization. 

\subsection{Class 1 Experiment: Training T2T-ViT with $\STL$}
\paragraph{More Details on $\STL$ Replacement in T2T-ViT}\label{appendix_class_1}
In the ViT architecture, and in particular in T2T-ViT \citep{Yuan_2021_ICCV},
the input image is organized as patches.  In our case each patch is $16\times 16$ in resolution,  resulting
in a two dimensional spatial \emph{patch} space of shape $14\times 14$ for images of original resolution $224\times 224$.  Each patch corresponds to a \emph{token} in the language of transformer networks.  In addition to the $14\times 14=196$ tokens, an additional ``summary'' token is appended and used at the end for classification.  This results in $197$ tokens representing an instance image in the attention network pipeline. 

There are two technical challenges with this token space, when viewed under the $\STL$ lens.
\begin{enumerate}
    \item $\STL$ with tile size $t=4$ packs together every $4$ coordinates of the (activation) matrix, and $197$ is not divisible by $4$.  We chose to solve this by appending another $3$ \emph{null} rows to the activation input matrix $X$ (for each $\STL$ layer).  When obtaining the output matrix $Y$, we reduce the dimension from $200$ back to $197$ by linearly combining the last $4$ rows into a single row, using another $4$ trainable coefficient parameters.  There are other natural choices for this technical detail. 
 For example we could use $4$ summary tokens instead of one, but our choice  seemed to be the simplest.
    \item In the original ViT network architecture, the patches are organized in raster order, and therefore each $\STL$ tile packs together $4$ patches that visually correspond to a horizontal slab of length $4$ patches.  The choice of horizontal (vs. vertical) seems quite arbitrary, and we felt that it should not affect the inductive bias of the network.  Hence we have reorganized the order of patches, so that each $2\times 2$ \emph{square} of $4$ patches would be contiguous in memory, and hence in the activation matrix indexing.  This is done once before the attention pipeline and has negligible IO cost, which will become more negligible for larger ViTs.
\end{enumerate}

\paragraph{Preliminary Results in the Over-Parameterized Regime}
As we've stated before, we are not aware of other work that reports improvement on the Imagenet-1K classification problem, when trained from scratch on the Imagenet-1K training split with a network of similar size and considerable matrix pruning (in particular, structure pruning). The closest reported results we are aware of are \cite{chen2021chasing} which thoroughly studied pruning strategies of a related architecture called DeiT-Vision-Transformer. For a model DeiT-Tiny of a similar size as T2T-ViT-7, all their pruning experiments led to more than $2\%$ degradation of accuracy, even at only $30\%$ unstructured sparsity rate, let alone with $\twofour$ (structured) sparsification.  

The cases where they saw accuracy gains in from sparsification were on DeiT-Base which has roughly 80M parameters ($\times 4$ parameters compared to T2T-Vit-14). We argue that, for that size model on Imagenet-1k, the over-parameterization is so extreme that sparsification possibly helps by virtue of the regularization it offers. This is also confirmed by a followup experiment that we did on the T2T-ViT-14 architecture (21.5M parameters, 6.1G FLOPS per 224x224 image) from the same paper \cite{Yuan_2021_ICCV}.

For this model we lost between $2\%$ and $3\%$ accuracy when replacing with $\STL$, compared to baseline, for all values of $r$ ranging from $16$ to $49$. Recall that at $r=49$ there is provably no loss of expressivity, because $\STL$ at that tensor rank allows expressing exact matrix multiplication (by Strassen), and hence the empirical \textit{loss} of accuracy in this case is probably due to the \textit{extreme over-parameterization} owing to the effective increase in parameters.

\section{Pseudo-code For $\STL$}
For ease of notation, we let $\eX$ and $\eW$ denote the encoded versions of $X,W$, i.e., a tensor of size $(n/t,n/t,r)$ with $\eX[I,J,:]=\EX\cdot \v(X_{I,J})$ (the encoding of the $I,J$-th tile). Similarly for $\eW$. We let $Y=X\diamond W$, and $\eY$ denote the encoded tensor.

We note that in \texttt{PyTorch}, when trying to multiply the last dimension of a 3D tensor by a matrix, this is done in transposition to the clean mathematical formulation. In other words, to compute $\hX$, we need to compute $\EX\cdot X[I,J,:]$ for every $I,J$, where we view $X[I,J,:]$ as a $t^2$ column vector. In \texttt{PyTorch} this is done by \texttt{hatX = X \@ E\_X.T}, which can be interpreted as viewing $X[I,J,:]$ as a row vector of size $t^2$, and so the product with $\bE_{\mathbf{X}}^{\top}$ gives a new row vector of length $r$. The algorithms are written in a mathematical formulation first, and \texttt{PyTorch} formulation second. We also provide a \texttt{PyTorch} implementation in our public \publicRepo.

\begin{algorithm}[H]
\caption{$\STL$ Pseudo-code (GPUs)}
\label{algo}
\begin{algorithmic}
\REQUIRE \\
Tensor $X$ of shape $(n/t, n/t, t^2)$ (Each tile flattened) \\
Tensor $\eW$ of shape $(n/t, n/t, r)$ (Each tile encoded)\\
Encoding matrix $\EX$ of shape $(r, t^2)$  \\
Decoding matrix $\bD$ of shape $(t^2, r)$ 
\STATE{}
\STATE \textbf{Step 1: Encode}
\FOR{$I, J \in [n/t]$ (in parallel)}
    \STATE ${\eX}[I, J, :] \gets \EX \times X[I, J, :]$
\ENDFOR
\STATE (In \texttt{PyTorch}: \texttt{hatX = X @ E\_X.T})
\STATE {}
\STATE \textbf{Step 2: Batched Element-wise Product}
\FOR{$p \in [r]$ (in parallel)}
    \STATE ${\eY}[:, :, p] \gets {\eX}[:, :, p] \times {\eW}[:, :, p]$
\ENDFOR
\STATE (In \texttt{PyTorch}: \texttt{hatY = (hatX.permute(2,0,1) @ hatW.permute(2,0,1)).permute(1,2,0)})
\STATE {}
\STATE \textbf{Step 3: Decode}
\FOR{$I, J \in [n/t]$ (in parallel)}
    \STATE $Y[I, J, :] \gets \bD^{\top}\times {\eY}[I, J, :]$
\ENDFOR
\STATE (In \texttt{PyTorch}: \texttt{Y = hatY @ D})
\STATE \textbf{Return:} $Y$ (Each tile flattened)
\end{algorithmic}
\end{algorithm}

\begin{algorithm}
\caption{\label{algfused} $\STL$ Pseudo code for fused Steps 3+1, at layer $\ell$}
\begin{algorithmic}
\REQUIRE \\
Tensor $\eX_{\ell-1}$ of shape $(n/t, n/t, r)$ (Previous layer encoded output activations) \\
Tensor $\eW_\ell$ of shape $(n/t, n/t, r)$ (Encoded weights for this layer) \\
Encoding matrix $\EX$ of shape $(r, t^2)$ \\
Decoding matrix $\bD$ of shape $(t^2, r)$ 
\STATE
\State \textbf{Steps 3+1:}
\FOR{$I, J \in [n/t]$ (in parallel)}
    \STATE $\eX_{\ell}[I,J,:] \gets (\EX\times \bD^{\top})\times \eX_{\ell-1}[I,J,:]$
\ENDFOR
\STATE (In \texttt{PyTorch}: \texttt{hatX\_this = hatX\_prev @ (D @ E\_X.T)})
\STATE{}
\State  \textbf{Step 2:}
\FOR{$p\in [r]$ (in parallel}
    \STATE $\eX_\ell[:,:,p] \gets \eX_\ell[:,:,p] \times \eW_\ell[:,:,p]$
\ENDFOR

\STATE \textbf{Return:} $\hat X_\ell$
\end{algorithmic}
\end{algorithm}
\fi

\ifdefined\ARX
\newpage
\appendix

\fi
\end{document}